\documentclass{article}

\usepackage[preprint]{neurips_2026}

\usepackage[utf8]{inputenc}
\usepackage[T1]{fontenc}
\usepackage{hyperref}
\usepackage{url}
\usepackage{booktabs}
\usepackage{amsfonts}
\usepackage{amsmath}
\usepackage{amssymb}
\usepackage{nicefrac}
\usepackage{microtype}
\usepackage{xcolor}
\usepackage{graphicx}
\usepackage{multirow}
\usepackage{subcaption}
\usepackage{enumitem}
\usepackage{amsthm}
\newtheorem{definition}{Definition}
\newtheorem{proposition}{Proposition}

\graphicspath{{figures/}}

\newcommand{\pibase}{\pi_{\text{base}}}
\newcommand{\pisft}{\pi_{\text{SFT}}}
\newcommand{\pirl}{\pi_{\text{RL}}}
\newcommand{\piexplore}{\pi_{\text{RL+explore}}}

\title{Does RL Expand the Capability Boundary of LLM Agents? A \textsc{Pass}@$(k,T)$ Analysis\thanks{Code available at \url{https://github.com/zhiyuanZhai20/pass-kt-analysis}.}}

\author{
  Zhiyuan Zhai \\
  Fudan University \\
  \texttt{22110720067@m.fudan.edu.cn} \\
  \And
  Wenjing Yan \\
  Chinese University of Hong Kong \\
  \texttt{wenjingyan@cuhk.edu.hk} \\
  \And
  Xiaodan Shao \\
  University of Waterloo \\
  \texttt{x6shao@uwaterloo.ca} \\
  \And
  Xin Wang \\
  Fudan University \\
  \texttt{xwang11@fudan.edu.cn}
}

\begin{document}

\maketitle

\begin{abstract}
Does reinforcement learning genuinely expand what LLM agents can do, or does it merely make them more reliable at what they could already do? For \emph{static} mathematical reasoning, a recent line of work answers the second: base and RL pass@$k$ curves converge at large $k$, and RL only redistributes probability mass within the base model's existing capability set. We ask whether this pessimistic reading survives the move to \emph{agentic} tool use, where an agent's $T$ rounds of environment interaction enable compositional strategies that re-sampling cannot recover. We introduce \textsc{Pass}@$(k,T)$, a two-dimensional evaluation metric that jointly varies a sampling budget $k$ and an interaction-depth budget $T$, cleanly separating capability expansion from efficiency improvement. Our main finding is that, contrary to the static-reasoning result, tool-use RL genuinely enlarges the agent's capability boundary: as the sampling budget grows, the RL agent's pass-curve pulls above the base model's and the gap \emph{widens} at the right tail rather than converging, the opposite of what the static-reasoning RLVR literature reports. The expansion is specific to tasks that require compositional, sequential information gathering; on simpler categories RL behaves as prior work predicts. Under matched training data, supervised fine-tuning produces the opposite effect: it \emph{regresses} the capability boundary on the same compositional tasks, which isolates self-directed exploration, not data exposure, as the causal factor. A mechanism analysis explains how the expansion arises: RL reweights the base model's existing strategy distribution toward the subset whose downstream reasoning more often yields a correct answer, with the improvement concentrated on how the agent \emph{integrates} retrieved information rather than on what it searches for. These results reconcile the optimistic and pessimistic readings of RL for LLMs: both are correct, on different task types. In agentic settings, when the environment admits compositional solutions the base model cannot stitch together alone, RL meaningfully teaches new capabilities.
\end{abstract}


\section{Introduction}

LLM-based agents that interact with external environments through tool use represent one of the
most rapidly advancing frontiers in AI \citep{yao2023react,schick2023toolformer,qin2024toolllm,nakano2022webgpt}. Built on large-scale pre-trained models \citep{brown2020gpt3,openai2023gpt4,touvron2023llama}, agents operate in a loop (observe, act, receive feedback, and act again), enabling tasks that require external knowledge or multi-step information gathering \citep{mialon2023augmented,wang2024survey}.

Reinforcement learning has emerged as the method of choice for training such agents. Building on RLHF for instruction following \citep{ouyang2022instructgpt,christiano2017deep} and RL-based reasoning improvements \citep{guo2025deepseekr1,openai2024reasoning}, recent systems
including Agent-R1 \citep{zhang2025agentr1}, ReTool \citep{feng2025retool}, and Agent-Q \citep{putta2024agentq} apply RL end-to-end, achieving impressive results on benchmarks spanning web navigation, code generation, and tool-augmented reasoning.

But beneath these empirical successes lies a question that has received surprisingly little scrutiny:

\begin{center}
\textit{Does RL expand what LLM agents can do, or merely make them more reliable at what they can already do?}
\end{center}

When an RL-trained agent outperforms its base model on a benchmark, the improvement could
stem from two fundamentally different sources. The first is \textbf{capability expansion}: RL teaches the
agent to solve problems that the base model cannot solve at all, for instance, by discovering novel
tool-use strategies that access information the base model would never seek out. The second is
\textbf{efficiency improvement}: RL makes the agent more reliably solve problems that the base model can
already solve, by increasing the probability of selecting effective strategies. These two sources of
improvement have entirely different implications for the field. If RL primarily expands capabilities,
then investing in better RL algorithms and exploration mechanisms is the path forward. If RL
primarily improves efficiency, then the capability ceiling is determined by the base model, and effort
is better spent on pre-training and data quality.

Distinguishing between these two effects requires evaluation tools that go beyond standard accuracy
metrics. A model's accuracy (pass@1) conflates capability and efficiency: an improvement from 30\%
to 60\% accuracy could mean the model learned to solve new problems, or it could mean the model
became more consistent at solving problems it could already occasionally solve. Existing agentic
benchmarks \citep{liu2024agentbench,jimenez2024swebench,dua2019drop,zheng2024gptfathom} report aggregate accuracy numbers and do not
attempt this decomposition.

In this paper, we introduce a framework that enables this decomposition. Our key insight is that
agent capability is inherently two-dimensional: it depends on both (1) how many independent
attempts the agent is given, and (2) how deeply the agent can interact with the environment in each
attempt. We formalize this through PASS@(k,T), a metric that evaluates whether an agent can solve a
problem given $k$ independent trajectories, each with at most $T$ rounds of environment interaction.
This two-dimensional structure naturally separates efficiency from capability: varying $k$ reveals
sampling efficiency (whether more attempts help), while varying $T$ reveals interaction capability
(whether deeper environment engagement unlocks new solutions). The interplay between $k$ and $T$
further reveals whether the value of RL lies in better sampling or better interaction, a question that
no existing metric can address.

We make three contributions.

\begin{itemize}
\item \textbf{The first two-dimensional agentic evaluation framework.} We introduce \textsc{Pass}@$(k, T)$, the first metric that evaluates an LLM agent on a sampling budget $k$ and an interaction-depth budget $T$ jointly. The $T$ axis has no analog in static pass@$k$ analysis and is what makes the agentic capability question answerable: it disentangles, for the first time, whether an improvement from training comes from sampling reliability or from productive use of deeper environment interaction. We establish the metric's formal properties (monotonicity, reduction to static pass@$k$, a strict-expansion corollary) and show that it is \emph{necessary}, not just convenient, to distinguish capability expansion from efficiency improvement in the agentic regime.

\item \textbf{The first empirical demonstration that agentic RL expands the capability boundary.} Using \textsc{Pass}@$(k,T)$ with \emph{matched} training data across base, SFT, and GRPO-trained agents, we establish that tool-use RL \emph{does} expand an LLM agent's capability boundary on compositional tool-use tasks. The signature is not an improvement in pass@1 (at the smallest sampling budgets, $\pibase$ and $\pirl$ are essentially tied) but a \emph{widening} gap as the sampling budget grows: the RL agent's pass-curve pulls above $\pibase$'s as $k$ increases and the gap reaches its maximum at the capability-boundary limit, the \emph{opposite} of the convergence pattern reported for static-reasoning RLVR. This directly contradicts the pessimistic conclusion that the static-reasoning literature has been taken to imply for LLM agents in general. Matched-data SFT produces the opposite effect on the same tasks, which identifies the learning signal, not data exposure, as the causal factor.

\item \textbf{A mechanistic account of how the expansion arises.} We give a three-pronged mechanism analysis (perplexity decomposition, cross-policy swapping, and strategy-diversity measurement) and use it to characterize the training intervention: RL reweights the base model's existing strategy distribution toward the subset whose downstream reasoning more often produces a correct answer, with the improvement concentrated on how the agent \emph{integrates} retrieved information rather than on what it searches for. This is a sharper account than the standard ``RL optimizes for reward'' story and has direct implications for where reward design and exploration bonuses can most usefully be applied.
\end{itemize}

We discuss related work in Appendix~\ref{app:related}, covering RL for LLM agents, tool-augmented benchmarks, and the pass@$k$-based capability-evaluation debate.

\section{The PASS@(k, T) Framework}
\label{sec:framework}

\subsection{Motivation: why agent evaluation needs two dimensions}

Accuracy (pass@1) conflates two different questions: (Q1) whether the agent can ever solve the problem, and (Q2) how reliably it does so among solvable problems. Pass@$k$ resolves this ambiguity in the \emph{static} setting by varying the sampling budget $k$: at large $k$, pass@$k$ converges to the capability indicator, and \citet{yue2025rlvr,wang2025limits} use this observation to argue that RLVR on mathematical reasoning redistributes mass within the base model's capability set but does not enlarge it. Related approaches based on repeated sampling \citep{wang2023selfconsistency,snell2024scaling,cobbe2021gsm8k} and process reward models \citep{lightman2024lets} similarly focus on the $k$ axis alone. In the \emph{agentic} setting, pass@$k$ is insufficient because it treats each attempt as a black box with a fixed interaction budget, and cannot answer a third question unique to tool-using agents: (Q3) \emph{does deeper environment interaction unlock problems that cannot be solved at shallow interaction, no matter how many times one re-samples?} An agent with $T$ rounds of tool use can execute compositional strategies, such as chaining one retrieval's output into the next retrieval's input or verifying intermediate results \citep{trivedi2022musique,khot2023decomposed}, that are structurally unavailable at shallower $T$; a bridge question such as ``What is the nationality of the director of Film~$X$?'' cannot be answered with a single retrieval because the second query's argument is a function of the first query's result, and no amount of re-sampling at $T = 1$ can compensate. We therefore parametrize capability by both the sampling budget $k$ and the interaction-depth budget $T$: the $k$-axis probes sampling reliability (Q2) and the $T$-axis probes interaction capability (Q3).

\subsection{Definition}

We formalize the agent-environment interaction as follows. A policy $\pi$ interacts with an environment for up to $T$ rounds. At round $t$, the agent observes a state $s_t$ (containing the task, the action history, and all previous observations), selects an action $a_t$ (either a tool call or a final answer), and receives an observation $o_t = \mathsf{env}(s_t, a_t)$. A trajectory $\tau = (s_0, a_0, o_0, s_1, a_1, o_1, \ldots, s_T)$ records the full interaction. The trajectory succeeds iff its final action is a correct answer, i.e., $\mathrm{EM}(\mathsf{answer}(\tau), y^\star(q)) = 1$ for gold answer $y^\star$. For a fixed interaction budget $T$, let $c_T(q, \pi; n)$ be the number of successful trajectories among $n$ i.i.d.\ rollouts of $\pi$ on problem $q$.

\textsc{Pass}@$(k, T)$ answers the following question: if we run the agent $k$ independent times on $q$, giving each run at most $T$ rounds of environment interaction, what is the probability that \emph{at least one} of the $k$ runs produces a correct final answer? It is therefore the capability of policy $\pi$ on $q$ under a \emph{compound} budget ($k$ attempts on one axis, $T$ interaction rounds on the other) and tells us what the agent can achieve when both its sampling and its interaction resources are bounded.

\begin{definition}[PASS@$(k, T)$]
\label{def:passkt}
For a problem $q$, policy $\pi$, interaction depth $T \geq 0$, and sampling budget $k \leq n$,
\begin{equation}
\textsc{Pass}@(k, T)(q, \pi) \;\triangleq\; \Pr_{\tau_1, \ldots, \tau_k \sim \pi(\cdot \mid q; T)}\!\left[\exists\, i \in [k] : \tau_i \text{ is correct}\right] \;=\; 1 - \frac{\binom{n - c_T}{k}}{\binom{n}{k}}.
\end{equation}
The right-hand form is the unbiased hypergeometric estimator \citep{chen2021codex} computed from $n$ i.i.d.\ rollouts of $\pi$ at depth $T$ with $c_T$ successes, and is what we use throughout. When $k > n - c_T$, the binomial coefficient is taken as $0$ and $\textsc{Pass}@(k, T) = 1$.
\end{definition}

The two axes isolate two conceptually distinct things an agent can do. The $k$ axis asks ``how reliably does $\pi$ solve $q$?'' At small $k$, \textsc{Pass}@$(k, T)$ tracks pass@1 accuracy and is sensitive to sampling noise; at large $k$, it tells us whether $\pi$ can ever produce a correct answer at all, i.e., whether $q$ lies inside $\pi$'s capability set. The $T$ axis asks ``how much of $q$ requires the environment?'' At $T = 0$ the agent must answer from parametric knowledge alone, and increases in \textsc{Pass}@$(k, T)$ as $T$ grows measure value that cannot be recovered by re-sampling. Two boundary cases sharpen this. When $k \to n$, $\textsc{Pass}@(k, T)(q, \pi) \to \mathbf{1}[c_T > 0]$, the indicator of whether $\pi$ is \emph{capable} of solving $q$ at depth $T$; when $T = 0$ the metric reduces to the standard static pass@$k$ of \citet{chen2021codex}. $\textsc{Pass}@(k, T)$ therefore strictly generalizes pass@$k$, and the two extreme slices of the $(k, T)$ grid recover, respectively, the capability-boundary indicator and the static-reasoning metric.

\subsection{Capability boundary, expansion, and efficiency}
\label{sec:boundary}

The boundary-case $k \to n$ motivates the following structural definitions, which formalize what it means for RL to ``teach'' a policy something the base model does not already know.

\begin{definition}[Capability boundary]
The capability boundary of a policy $\pi$ at interaction depth $T$ is
$\mathcal{B}_T(\pi) \,=\, \{q \in \mathcal{Q} : \lim_{k \to \infty} \textsc{Pass}@(k, T)(q, \pi) > 0\}$,
the set of problems $\pi$ can solve at least once under unlimited re-sampling at depth $T$.
\end{definition}

\begin{definition}[Capability expansion]
$\pirl$ achieves capability expansion over $\pibase$ at depth $T$ iff $\mathcal{B}_T(\pirl) \setminus \mathcal{B}_T(\pibase) \neq \emptyset$: there exists at least one problem that the RL agent solves but the base agent cannot solve at any sampling budget.
\end{definition}

\begin{definition}[Efficiency improvement]
$\pirl$ achieves efficiency improvement on $q \in \mathcal{B}_T(\pibase) \cap \mathcal{B}_T(\pirl)$ at depth $T$ iff $\textsc{Pass}@(1, T)(q, \pirl) > \textsc{Pass}@(1, T)(q, \pibase)$: both models can solve $q$, but the RL model does so more reliably.
\end{definition}

The two definitions are genuinely distinct: efficiency improvement moves probability mass within the intersection of capability sets, while capability expansion enlarges the union. A given training procedure may produce both, either, or neither; disentangling the two is the central measurement problem that this paper addresses.

\subsection{Diagnostic properties of \textsc{Pass}@$(k, T)$}
\label{sec:diagnostic}

The two-dimensional structure of \textsc{Pass}@$(k, T)$ enables diagnostics that no one-dimensional metric can provide.

\paragraph{Monotonicity.} \textsc{Pass}@$(k, T)$ is non-decreasing in both $k$ and $T$: for $k_1 \leq k_2$ and $T_1 \leq T_2$, $\textsc{Pass}@(k_1, T_1) \leq \textsc{Pass}@(k_2, T_2)$. Any trajectory that succeeds at depth $T_1$ remains successful at $T_2 > T_1$ (unused rounds are simply not exercised), and monotonicity in $k$ is the standard hypergeometric property; a full proof is in Appendix~\ref{app:properties}.

\paragraph{Marginal value.} The partial improvements $\Delta_k = \textsc{Pass}@(2k, T) - \textsc{Pass}@(k, T)$ and $\Delta_T = \textsc{Pass}@(k, T+1) - \textsc{Pass}@(k, T)$ quantify the marginal value of doubling the sampling budget versus deepening the interaction budget by one round. When $\Delta_T \gg \Delta_k$, environment interaction provides value that independent re-sampling cannot replicate, and a practitioner on a fixed compute budget should prefer to deepen $T$; when $\Delta_T \ll \Delta_k$, re-sampling dominates.

\paragraph{Interaction saturation.} We define $T^\star(\pi; \epsilon) = \min\{T : \textsc{Pass}@(k_{\max}, T+1) - \textsc{Pass}@(k_{\max}, T) < \epsilon\}$ as the smallest interaction depth beyond which an additional round of interaction buys less than $\epsilon$ at the high-$k$ limit. A higher $T^\star$ for one model than another indicates that the former extracts productive value from deeper retrieval chains; equal $T^\star$ with unequal plateau heights indicates that training has raised the ceiling at a fixed interaction horizon rather than extended the horizon itself.

\paragraph{Reduction to static pass@$k$.} When $T = 0$ the agent has no tool access and $\textsc{Pass}@(k, 0) = \text{pass}@k$, so Category A (MATH-500 with no tool) functions as a sanity check against published static pass@$k$ numbers for the base model and, by construction, measures whether tool-use RL has any collateral effect on parametric reasoning.


\section{Experimental Setup}
\label{sec:setup}

\paragraph{Tasks and retrieval environment.}
We use three task categories of increasing interaction complexity, each with 100 test problems: \textbf{Category A} is MATH-500 \citep{hendrycks2021math} with no tool access, a negative control excluded from training; \textbf{Category B} is HotPotQA comparison questions \citep{yang2018hotpotqa} (two \emph{independent} retrievals from the question itself), and \textbf{Category C} is HotPotQA bridge questions (two \emph{compositionally dependent} retrievals, where the second query's argument must be extracted from the first query's result). Categories B and C each provide 100 training problems from HotPotQA's training split, for a total training set of 200 shared between $\pisft$ and $\pirl$. The agent has a single tool, \texttt{search(query)}, which performs deterministic BM25 \citep{robertson2009bm25} retrieval over the standard HotPotQA distractor corpus (10 paragraphs per question, 2 gold and 8 distractors); identical queries always return the same paragraph. The agent follows a ReAct loop \citep{yao2023react} of alternating \texttt{Thought}/\texttt{Search}/\texttt{Answer} actions with retrieved paragraphs appended as \texttt{Observation}; a trajectory is successful iff its final \texttt{Answer} matches the gold answer under light normalization (lowercase, strip articles and trailing punctuation).

\paragraph{Models under a matched-data comparison.}
All three trained models start from the same base: Qwen2.5-7B-Instruct \citep{qwen25} with search-tool instructions in the system prompt. $\pisft$ is $\pibase$ LoRA-fine-tuned \citep{hu2022lora} on 200 expert trajectories constructed from HotPotQA's gold supporting-fact annotations, with observation tokens masked from the loss. $\pirl$ is $\pibase$ trained with GRPO \citep{shao2024deepseekmath} on the \emph{same 200 problems} with binary exact-match reward, group size $G = 8$, per-token $k_3$ KL estimator \citep{schulman2020kl} against the LoRA-disabled reference, $T_{\mathrm{train}} = 5$, temperature $0.7$, and $10$ epochs (1000 steps). We additionally train an exploration-bonus variant $\piexplore$ with reward $R_{\mathrm{aug}} = R + \lambda\cdot R_{\mathrm{explore}}$, $\lambda = 0.1$, where $R_{\mathrm{explore}} = 1$ if the set of retrieved-paragraph titles in the trajectory is unseen in the current training batch (Appendix~\ref{app:exploration}). The critical design choice is that $\pisft$ and $\pirl$ see the \emph{same} 200 problems and differ only in the learning signal; any $\mathcal{B}_T(\pirl) \setminus \mathcal{B}_T(\pisft)$ asymmetry therefore cannot be attributed to data exposure.

\paragraph{Evaluation.}
For each model, problem, and $T \in \{0, 1, 2, 3, 5\}$ we draw $n = 64$ i.i.d.\ trajectories at temperature $0.7$ using vLLM \citep{kwon2023vllm} and compute $\textsc{Pass}@(k, T)$ on the grid $k \in \{1, 2, 4, 8, 16, 32, 64\}$ via Def.~\ref{def:passkt}. Each $(q, T)$ cell is a separate rollout, so the $T$ sweep is not a restriction of a single long run. Full hyperparameters, expert-trajectory construction, training curves, and per-category PASS@$(k,T)$ tables are in Appendix~\ref{app:training}--\ref{app:fulltable}.

\section{Results}
\label{sec:results}

\subsection{Does RL Expand Agent Capability?}

We begin with the central question. Table~\ref{tab:capability} reports the capability boundary of each model at full evaluation budget ($k = 64$, $T = 5$), measured as the set of problems each model can solve at least once. The critical column is $|\mathcal{B}_{\text{RL}} \setminus \mathcal{B}_{\text{base}}|$: problems that the RL agent can solve but the base agent cannot solve in any of 64 attempts with 5 search rounds.

\begin{table}[t]
\centering
\caption{Capability boundary analysis. Number of problems (out of 100 per category) solvable at PASS@(64, 5) and the overlap between model capability sets. $|\mathcal{B}_X \setminus \mathcal{B}_Y|$ denotes problems solvable by $X$ but not $Y$. Bridge questions (Category C) show the largest RL expansion over the base agent.}
\label{tab:capability}
\begin{tabular}{lcccccc}
\toprule
Category & $|\mathcal{B}_{\text{base}}|$ & $|\mathcal{B}_{\text{SFT}}|$ & $|\mathcal{B}_{\text{RL}}|$ & $|\mathcal{B}_{\text{RL}} \setminus \mathcal{B}_{\text{base}}|$ & $|\mathcal{B}_{\text{base}} \setminus \mathcal{B}_{\text{RL}}|$ & $|\mathcal{B}_{\text{SFT}} \setminus \mathcal{B}_{\text{base}}|$ \\
\midrule
A (Pure Reasoning) & 84 & 89 & 84 & 3 & 3 & 5 \\
B (Independent Retr.) & 82 & 85 & 86 & 5 & 1 & 5 \\
C (Sequential Retr.) & 77 & 73 & \textbf{81} & \textbf{5} & 1 & $-4$ \\
\bottomrule
\end{tabular}
\end{table}
\vspace{-4pt}

Table~\ref{tab:capability} reveals a three-layer pattern that cleanly maps onto the three task categories:

\textbf{Category A (no effect).}\quad On pure mathematical reasoning, $|\mathcal{B}_{\text{RL}}| = |\mathcal{B}_{\text{base}}| = 84$, and $|\mathcal{B}_{\text{RL}} \setminus \mathcal{B}_{\text{base}}| = |\mathcal{B}_{\text{base}} \setminus \mathcal{B}_{\text{RL}}| = 3$. The RL-trained agent neither gains nor loses MATH-500 capability relative to the base model, confirming that tool-use RL does \emph{not} transfer to pure reasoning when mathematics is excluded from the training distribution.

\textbf{Category B (small expansion).}\quad On comparison questions, RL expands the capability boundary modestly: 5 problems become newly solvable while only 1 is lost, yielding a net gain of 4 problems over the base agent. SFT shows a similar net gain of 3 (+5 new, $-$2 lost). RL still provides a clear capability-boundary gain over $\pibase$, but its advantage over $\pisft$ is not significant: comparison questions reduce to two independent retrievals that are straightforwardly demonstrated in the expert trajectories, so supervised imitation is already a nearly sufficient learning signal and leaves little headroom for the exploration advantage of RL to show up.

\textbf{Category C (substantial expansion + SFT regression).}\quad On bridge questions, which require sequential information gathering, the picture is qualitatively different. RL expands the capability boundary by 5 problems with only 1 regression (net +4), reaching $|\mathcal{B}_{\text{RL}}| = 81$, the largest of any model we evaluate. In stark contrast, SFT \emph{contracts} the boundary: $|\mathcal{B}_{\text{SFT}}| = 73$, losing 7 problems the base agent could solve while gaining only 3 (net $-$4). The asymmetry between RL and SFT on Category C is the single sharpest quantitative finding of this study: $|\mathcal{B}_{\text{RL}} \setminus \mathcal{B}_{\text{SFT}}| = 9$ versus $|\mathcal{B}_{\text{SFT}} \setminus \mathcal{B}_{\text{RL}}| = 1$. Given that SFT and RL share identical 200-problem training data, this 9:1 gap can be attributed to the \emph{learning signal} itself (expert imitation vs.\ self-directed exploration under binary reward), not to data exposure.

Aggregated across all 300 problems, the RL agent's total capability boundary is $|\mathcal{B}_{\text{RL}}| = 251$ vs.\ $|\mathcal{B}_{\text{base}}| = 243$ and $|\mathcal{B}_{\text{SFT}}| = 247$. At PASS@(64,5) on Category C, the RL agent reaches 0.81 compared to 0.77 for the base agent and 0.73 for SFT, a +4 percentage-point expansion at the capability-boundary limit on the category where the two-dimensional evaluation matters most.

\subsection{The PASS@$(k,T)$ surface and pass-curves}
\label{sec:landscape}

Figure~\ref{fig:heatmaps} shows the full $(k, T)$ surface as a $3\times 3$ grid of heatmaps (rows = categories, columns = models), and Figure~\ref{fig:passk} collapses the $T = T_{\max}$ slice into pass-curves on a log-$k$ axis. Three qualitative patterns match the three-layer Table~\ref{tab:capability} answer.

\begin{figure}[t]
\centering
\includegraphics[width=0.75\textwidth]{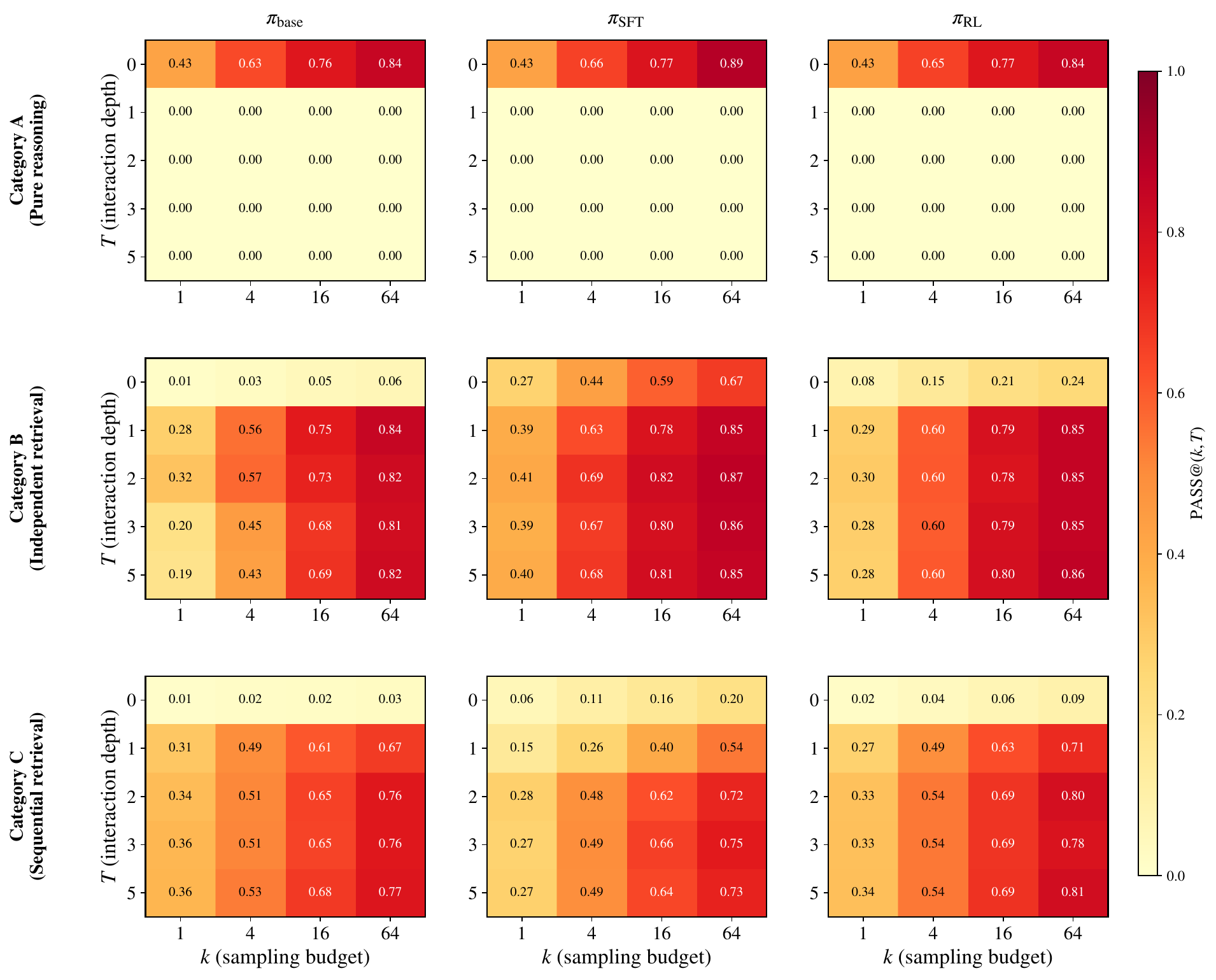}
\caption{The full \textsc{Pass}@$(k, T)$ landscape: rows are task categories, columns are models. On Category~A every row is flat in $T$ (tool unavailable). On Category~B the surface saturates at $T = 2$. On Category~C $\pirl$'s panel is uniformly warmer than $\pibase$'s at $T \geq 2$, while $\pisft$'s is cooler; the training signal shifts the entire surface.}
\label{fig:heatmaps}
\end{figure}

\begin{figure}[t]
\centering
\includegraphics[width=0.92\textwidth]{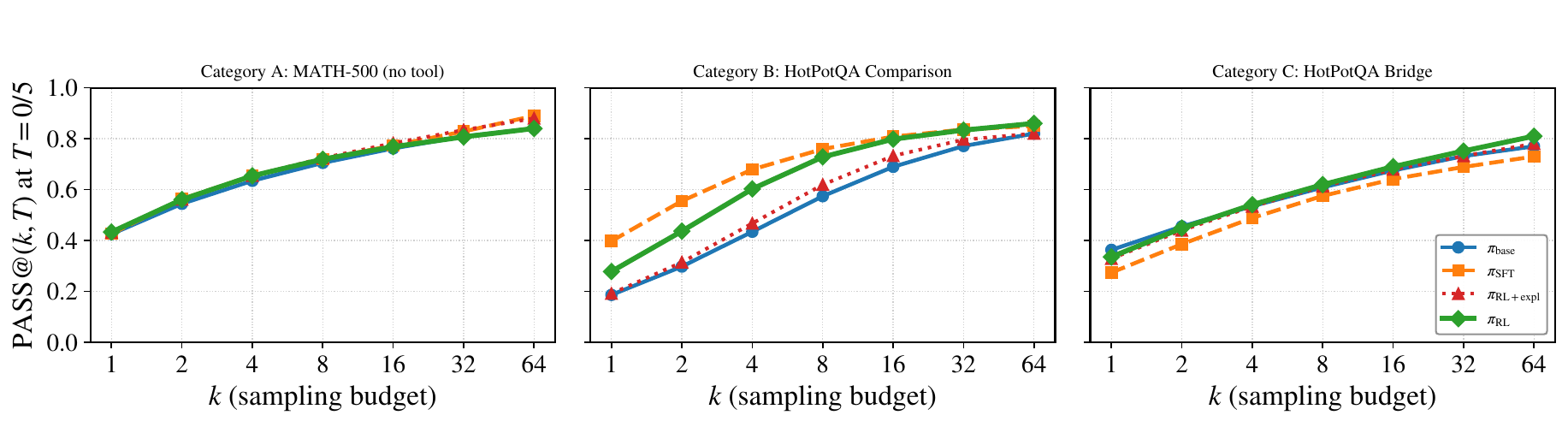}
\caption{$\textsc{Pass}@(k, T_{\max})$ vs.\ sampling budget $k$ on a log axis. $T_{\max} = 0$ for Category A and $T_{\max} = 5$ for B and C. On Category C, $\pibase$ and $\pirl$ cross near $k = 4$: at $k = 1$ $\pibase$ is slightly ahead, but as $k$ grows $\pirl$ pulls above and the gap \emph{widens} ($+4$ pp at $k = 64$), the opposite of the convergence reported by \citet{yue2025rlvr}. $\pisft$ sits below both.}
\label{fig:passk}
\end{figure}

\textbf{(i) Category A is flat in $T$; pass-curves cluster within $\leq 5$ pp with no systematic ordering.} Tool-use RL is orthogonal to parametric MATH-500 reasoning, the expected negative control.

\textbf{(ii) Category B saturates at $T = 2$.} Every model jumps from near-zero at $T = 0$ to its plateau at $T \in \{1, 2\}$ and stays flat. At $k = 64$, $\pirl$ is highest ($0.86$), $\pisft$ follows ($0.85$), and $\pibase$ trails by $\sim 4$ pp; the two-retrieval strategy is simple enough that both training signals find it, and the RL-over-SFT gap is just $1$ pp.

\textbf{(iii) Category C pass-curves diverge as $k$ grows, with $\pirl$ pulling ahead.}
At $k = 1$, $\pibase$ is actually slightly \emph{ahead} of $\pirl$ on Cat C ($0.363$ vs.\ $0.335$): single-shot, RL is less reliable than base. As $k$ grows the ordering flips near $k = 4$ and the gap \emph{widens}, reaching $\textsc{Pass}@(64, 5) = 0.81$ vs.\ $0.77$ for $\pibase$ and $0.73$ for $\pisft$, a $+4$ pp gap that is \emph{largest} at the right tail. This is exactly the decomposition \textsc{Pass}@$(k, T)$ is designed to expose: if RL's effect were efficiency improvement, the RL-over-base gap would be largest at small $k$ and converge at large $k$, the behavior \citet{yue2025rlvr} report for RLVR on static reasoning. We see the opposite: on Cat C, RL's contribution is primarily \emph{capability expansion}, not efficiency improvement. RL has effectively \emph{traded efficiency for capability}: giving up a small amount of single-shot reliability in exchange for the ability to solve additional problems that lay outside the base agent's capability set at any sampling budget.

\subsection{Per-problem capability analysis and interaction saturation}
\label{sec:perproblem}

Averaged pass-curves hide how \emph{many} individual problems changed hands. Figure~\ref{fig:capability}\,(left) plots per-problem $\textsc{Pass}@(64, 5)$ on the 100 bridge questions, with rows sorted by $\pirl$'s value; the top half has a clear band of rows that are bright for $\pirl$ but dark for $\pibase$. Figure~\ref{fig:capability}\,(right) collapses the scatter into four-way stacked counts against $\pibase$: \{both solve, only base, only model, neither\}. The $\pirl$-vs-$\pibase$ decomposition yields $|\text{only }\pirl| = 5$ and $|\text{only }\pibase| = 1$, a 5:1 asymmetry, with 76 problems in the intersection. For $\pisft$ the corresponding decomposition is $3$:$7$, the opposite direction: matched-data SFT \emph{loses} more bridge questions than it gains. The exploration-bonus variant $\piexplore$ sits in between at $4$:$3$. In the $\pirl$-vs-$\pisft$ comparison, $|\mathcal{B}^{(C)}_{\text{RL}} \setminus \mathcal{B}^{(C)}_{\text{SFT}}| = 9$ against $|\mathcal{B}^{(C)}_{\text{SFT}} \setminus \mathcal{B}^{(C)}_{\text{RL}}| = 1$: a 9:1 asymmetry under identical training data, which cannot be explained by data exposure.

\begin{figure}[t]
\centering
\begin{subfigure}[b]{\textwidth}
\centering
\includegraphics[width=0.92\textwidth]{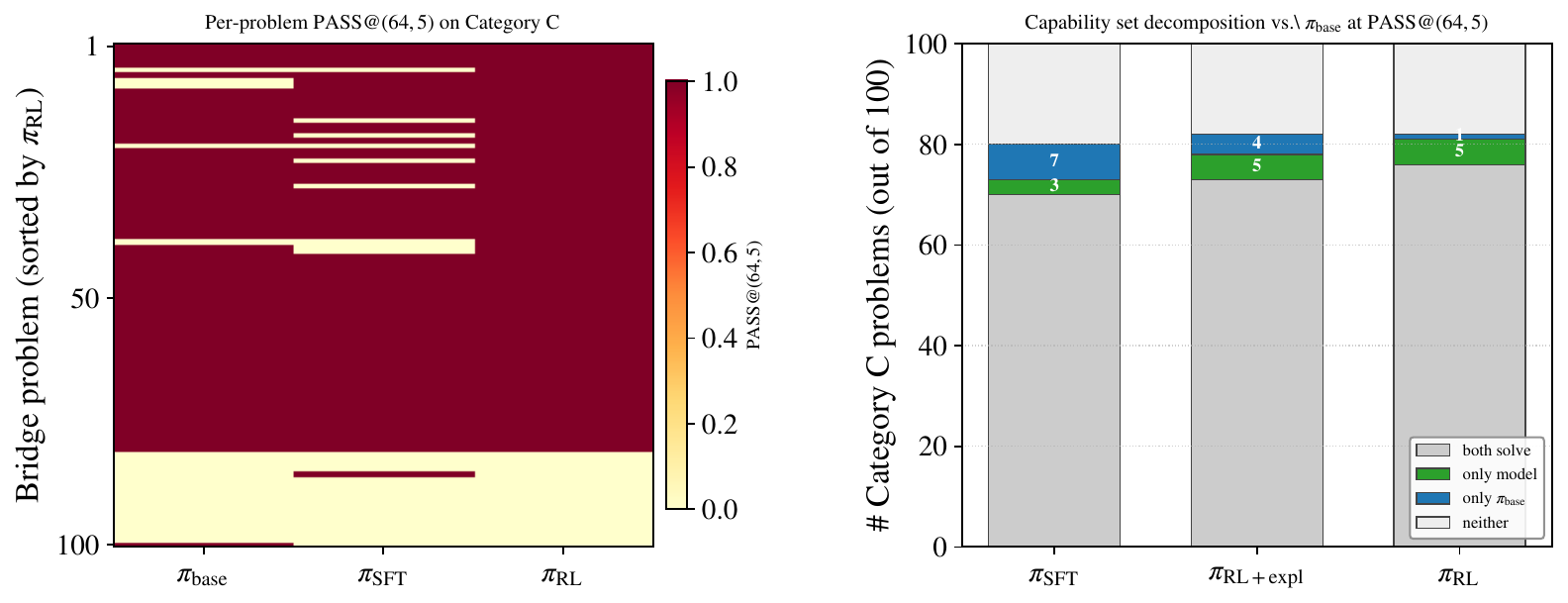}
\caption{\textbf{Left:} per-problem $\textsc{Pass}@(64, 5)$ on the 100 bridge questions, rows sorted by $\pirl$'s value. \textbf{Right:} capability-set decomposition against $\pibase$. The 5:1 asymmetry for $\pirl$ (5 problems only $\pirl$ solves, 1 only $\pibase$) is the cleanest visual statement of capability expansion. $\pisft$'s ratio inverts to 3:7.}
\label{fig:capability}
\end{subfigure}

\vspace{4pt}
\begin{subfigure}[b]{\textwidth}
\centering
\includegraphics[width=0.85\textwidth]{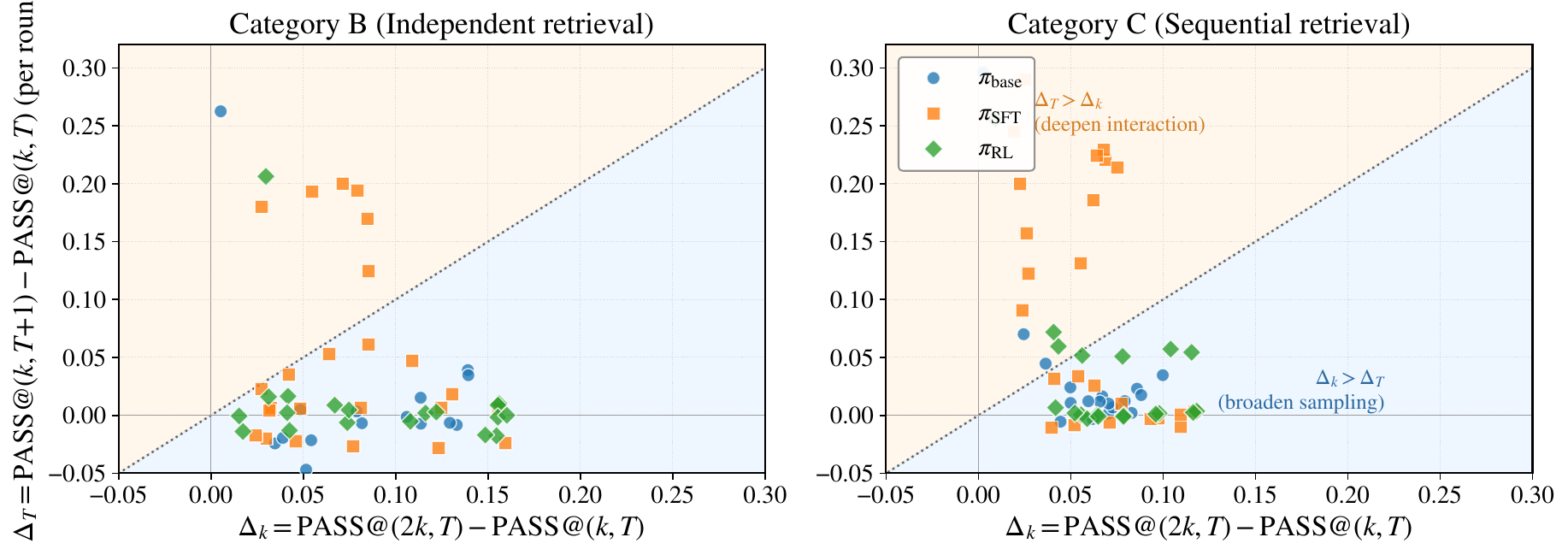}
\caption{Marginal value of interaction depth ($\Delta_T$) vs.\ sampling breadth ($\Delta_k$) on Categories B and C. Each point is one $(k, T)$ operating point. Points above the diagonal favor deepening $T$; below favor broadening $k$. Category A omitted ($\Delta_T = 0$).}
\label{fig:marginal}
\end{subfigure}
\caption{Per-problem capability analysis and marginal-value diagnostics on the $(k, T)$ grid.}
\label{fig:capability_marginal}
\end{figure}

\paragraph{Interaction saturation.}
At $\epsilon = 0.02$, $k_{\max} = 64$, we find $T^\star(\pibase) = T^\star(\pirl) = 2$ on Category C, while $T^\star(\pisft) = 3$. Both the base and RL agents level off at the same interaction horizon; what differs is the \emph{plateau height} ($\pirl$ at $0.81$ vs.\ $\pibase$ at $0.77$). RL therefore does not teach the agent to chain more retrievals; it teaches it to convert existing two-hop chains into correct answers more often. The full saturation table is in Appendix~\ref{app:robustness}.

\paragraph{Marginal value analysis.}
\label{sec:marginal}
We compute $\Delta_k = \textsc{Pass}@(2k, T) - \textsc{Pass}@(k, T)$ and $\Delta_T$ as the adjacent-step improvement (Figure~\ref{fig:marginal}). Category B concentrates below the diagonal: once the two-retrieval plateau is reached, $\Delta_k$ dominates. Category C has richer structure: points above the diagonal at small $k$ drift below it as $k$ grows, consistent with saturation at $T^\star = 2$--$3$. Both $\pirl$ and $\pibase$ concentrate below the diagonal on Category C, confirming that once interaction saturates at $T^\star = 2$, broadening the sampling budget $k$ is the dominant source of marginal gains. The marginal-return structure is similar across models; RL's advantage lies in the absolute capability-boundary height rather than in per-step marginal returns. The practical budget rule: Cat B spends on $T$ until $T = 2$ then grows $k$; Cat C deepens $T$ until $T = 2$ then grows $k$, with $\Delta_T$ and $\Delta_k$ comparable at $k \leq 4$ and $\Delta_k$ dominant for $k \geq 8$. Full $\Delta$ tables and the fraction-ratio heatmap are in Appendix~\ref{app:marginal}.

\section{Mechanism Analysis: How Does RL Reshape the Agent?}
\label{sec:mechanism}

Section~\ref{sec:results} established \emph{what} changes in agent capability; we now ask \emph{how}. Three diagnostic analyses (perplexity decomposition, strategy-diversity measurement, and cross-policy swapping) each isolate a different hypothesis about where RL's improvement lives (queries, reasoning, or both). Figure~\ref{fig:mechanism} combines the three.

\begin{figure}[t]
\centering
\includegraphics[width=0.95\textwidth]{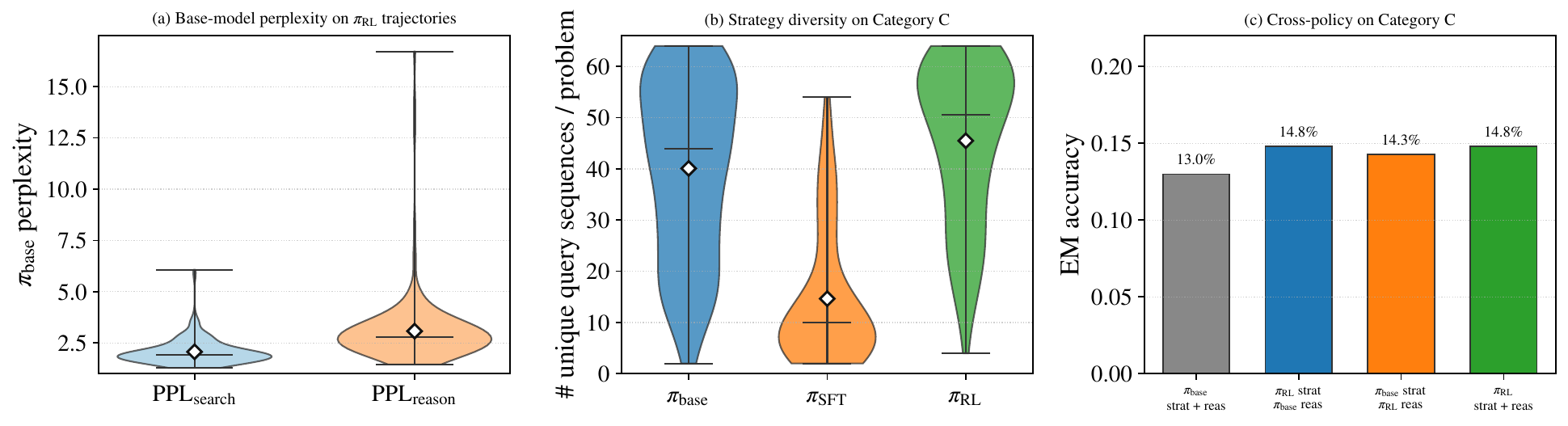}
\caption{Mechanism analysis on Category C. \textbf{(a)} Distributions of $\pibase$'s perplexity on 200 successful $\pirl$ trajectories, split into search-query tokens and reasoning tokens. The reasoning distribution (median $2.79$) is shifted substantially above the query distribution (median $1.94$); RL's divergence from base is not on \emph{what to search} but on \emph{how to integrate the returned paragraphs}. \textbf{(b)} Number of unique search-query sequences produced by each model from 64 samples per Category-C problem. $\pisft$ collapses to a median of 10 unique sequences per problem (a $3\times$ reduction vs.\ $\pibase$), while $\pirl$ is slightly above $\pibase$. \textbf{(c)} Cross-policy exact-match accuracy for the four $\{\pibase, \pirl\}^2$ (strategy, reasoning) conditions on Category C. Bar heights cluster inside the $\sim 4$ pp binomial noise band on $n \approx 80$; the ordering mildly favors $\pirl$-as-strategy over $\pirl$-as-reasoning.}
\label{fig:mechanism}
\end{figure}
\vspace{-6pt}

\paragraph{(a) Perplexity decomposition: novelty sits on reasoning, not queries.}
We isolate each side of an $\pirl$ trajectory by computing $\pibase$'s per-token negative log-likelihood restricted to tokens inside \texttt{Search:} lines (query formulation) versus tokens inside \texttt{Thought:}/\texttt{Answer:} lines (reasoning), with environment-generated observation tokens excluded (formulas and protocol in Appendix~\ref{app:training}). On $200$ successful $\pirl$ trajectories from Category C at $T = 5$ we obtain $\text{PPL}_{\text{search}} = 2.07$ (median $1.94$) vs.\ $\text{PPL}_{\text{reason}} = 3.08$ (median $2.79$), a $50\%$ gap. Per-token surprisals are $0.73$ nats for queries and $1.12$ nats for reasoning. If RL had primarily learned exotic search strings, $\text{PPL}_{\text{search}}$ should dominate; we see the opposite. The novelty that $\pibase$ detects in $\pirl$'s successful trajectories is concentrated on how the agent \emph{reasons over} the returned paragraphs, not on what it searched for.

\paragraph{(b) Strategy diversity: SFT collapses, RL preserves.}
For each Category C test problem we sample 64 trajectories at $T = 5$ and count distinct ordered tuples of (normalized) search queries. $\pibase$ produces a mean of $40.1$ unique sequences per problem (median $44$), $\pirl$ produces $45.5$ (median $51$), and $\pisft$ collapses to $14.7$ (median $10$), a $3\times$ reduction. A complementary ``novelty fraction'' (the share of a model's sequences that do not appear in any $\pibase$ trajectory for the same problem) is $97.7\%$ for $\pisft$ but only $83.9\%$ for $\pirl$: SFT has almost completely replaced the base query distribution, while $\pirl$ retains roughly one in six base-like sequences. The same temperature ($0.7$) and prompt template are used across all three models, so the diversity collapse is not a decoding artifact.

\paragraph{(c) Cross-policy swapping: causal attribution.}
Perplexity measures novelty but not causal contribution. We extract the (query, observation) pairs from each model's first successful Category C trajectory, strip the reasoning text, and re-present the fixed retrieval context to one of $\{\pibase, \pirl\}$ for final-answer generation, giving four conditions (full table in Appendix~\ref{app:mechanism_protocol}). Two asymmetries survive the $\sim 4$ pp binomial noise band on $n \approx 80$: (i) the denominator of reachable problems grows from 77 to 81 when we use $\pirl$'s retrieval plan, attributable purely to strategy; (ii) swapping reasoning to $\pirl$ at fixed $\pibase$ strategy raises EM by $1.3$ pp, and the symmetric swap raises it by $1.8$ pp. Both components contribute, with a mild tilt toward retrieval planning.

\paragraph{A single mechanism explains all three panels.}
The three panels are jointly consistent with \emph{reweighting}: RL pushes probability mass within the base model's existing strategy distribution toward the subset whose downstream reasoning lands on the correct answer, without meaningfully enlarging the underlying strategy set. This is qualitatively different from SFT's mechanism, which replaces the base distribution with a narrow expert-like one; on Category C, the replacement costs more problems than it gains while the reweighting gains more than it costs.

\vspace{-2pt}
\section{Discussion and Conclusion}

\paragraph{A three-layer answer, and its relation to static reasoning.}
``What does RL teach agents'' has a layered answer that tracks task complexity. On pure reasoning (Cat A), tool-use RL is inert, replicating \citet{yue2025rlvr}'s null for RLVR on MATH-500. On independent retrieval (Cat B), $\pirl$ and $\pisft$ expand the boundary by similar amounts ($+4$ and $+3$ net); the two-retrieval strategy is simple enough that either signal finds it. On sequential retrieval (Cat C), the picture changes qualitatively: $\pirl$ expands the boundary by a net $+4$, matched-data $\pisft$ \emph{regresses} by $-4$, and the 9:1 asymmetry under identical training data makes the learning-signal effect unambiguous. The Cat A inertness and the Cat C expansion are mutually consistent: RL redistributes probability mass, and redistribution can enlarge the capability set only if the base distribution already sparsely contains strategies the task rewards; the compositional bridge structure of Cat C supplies such strategies while MATH-500 does not.

\paragraph{Practical implications and limitations.}
On compositional tool-use tasks, prefer GRPO with task reward over expert-trajectory SFT; the latter actively regresses the capability boundary. On simpler tool-use tasks either signal produces comparable gains. The marginal-value analysis recommends deepening $T$ before broadening $k$ at small $k$; above $k \geq 8$ the two become comparable. Our main limitations are BM25 over a 10-paragraph corpus rather than web-scale retrieval, a single 7B base model, a single retrieval tool, and a 200-problem training budget. Scaling to 14B models, temperature sweeps, and $T > T_{\text{train}}$ extrapolation are the most urgent follow-ups (Appendix~\ref{app:robustness}). To conclude, \textsc{Pass}@$(k,T)$ shows that agentic RL expands the capability boundary via \emph{reweighting}, while SFT regresses it.


\bibliographystyle{plainnat}

\newpage
\appendix

\section{Related Work}
\label{app:related}

\textbf{RL for LLM agents and tool use.}\quad End-to-end RL over multi-turn tool-use trajectories is a rapidly growing family that includes Agent-R1 \citep{zhang2025agentr1}, ReTool \citep{feng2025retool}, Agent-Q \citep{putta2024agentq}, Search-R1 \citep{jin2025searchr1}, and MAGRPO \citep{chen2025magrpo}. These works target higher benchmark scores under a single reward-weighted objective but do not decompose the observed improvements into capability vs.\ efficiency. Earlier agent frameworks such as Reflexion \citep{shinn2023reflexion}, Voyager \citep{wang2023voyager}, and LATS \citep{zhou2024lats} employ verbal self-reflection or tree search rather than gradient-based RL, but similarly report aggregate task-completion rates. Tool-augmented LLM benchmarks such as ToolBench \citep{qin2024toolllm}, AgentBench \citep{liu2024agentbench}, and SWE-bench \citep{jimenez2024swebench} likewise report aggregate accuracy.

\textbf{LLM tool use and retrieval augmentation.}\quad The ability of LLMs to call external tools has been studied through Toolformer \citep{schick2023toolformer}, Gorilla \citep{patil2023gorilla}, ART \citep{paranjape2023art}, and Chameleon \citep{lu2024chameleon}. Retrieval-augmented generation (RAG) \citep{lewis2020rag,izacard2021leveraging} provides a complementary paradigm where a retriever supplies relevant passages to the generator. Our experimental setup, which uses BM25 retrieval in a ReAct loop, combines elements of both tool-use and RAG.

\textbf{Reasoning in LLMs.}\quad Chain-of-thought prompting \citep{wei2022chainofthought,kojima2022large} and self-consistency decoding \citep{wang2023selfconsistency} have shown that multi-step reasoning can be elicited from pre-trained LLMs. Process reward models \citep{lightman2024lets,uesato2022solving} and outcome-based verifiers \citep{cobbe2021gsm8k} provide training signals for improving reasoning. More recently, RL-trained reasoning models such as DeepSeek-R1 \citep{guo2025deepseekr1} and OpenAI o1 \citep{openai2024reasoning} have demonstrated substantial reasoning improvements, with scaling test-time compute \citep{snell2024scaling} emerging as a complementary axis to model scaling.

\textbf{Capability evaluation and RL's ``limits'' debate.}\quad The pass@$k$ metric \citep{chen2021codex} and its repeated-sampling analyses \citep{brown2024monkeys} have been used to distinguish capability expansion from efficiency in \emph{static} reasoning, and \citet{yue2025rlvr} use this lens to argue that RL on verifiable rewards (RLVR) for mathematical reasoning does not enlarge the base model's capability set: at large $k$, base and RL pass@$k$ curves converge. Holistic evaluation frameworks \citep{liang2023helm,srivastava2023beyond} and compositional benchmarks \citep{trivedi2022musique,khot2023decomposed} have broadened the evaluation landscape but do not offer a formal capability-expansion diagnostic. Pass@$k$ is inherently one-dimensional, however, and cannot speak to agentic tasks where the agent's interaction depth $T$ controls which families of compositional strategies are even representable. To analyze RL-trained LLM agents, we therefore design a new two-dimensional metric, \textsc{Pass}@$(k,T)$, that jointly varies a sampling budget $k$ and an interaction-depth budget $T$. Applied in a matched-data controlled design, \textsc{Pass}@$(k,T)$ reveals a qualitatively different picture from the static-reasoning case: RL \emph{does} produce capability-boundary expansion on compositional tool-use tasks, with the improvement concentrated on the reasoning, not the retrieval, component of the trajectory.

\textbf{RL foundations for LLMs.}\quad The REINFORCE algorithm \citep{williams1992reinforce} and its variance-reduced successors, including PPO \citep{schulman2017ppo}, form the backbone of RLHF \citep{ouyang2022instructgpt,christiano2017deep}. Offline alternatives such as DPO \citep{rafailov2023dpo} and Constitutional AI \citep{bai2022constitutional} avoid online rollouts but cannot straightforwardly handle multi-turn agentic trajectories with environment feedback. GRPO \citep{shao2024deepseekmath} and DAPO \citep{yu2025dapo} are recent online RL methods tailored to LLM training; we use GRPO throughout.

\section{System Prompt}
\label{app:system_prompt}

The following system prompt is used for all agents on HotPotQA tasks:

\begin{small}
\begin{verbatim}
You are a helpful assistant that answers questions by searching 
for relevant information.

You have access to one tool:
  search(query) - Search for a relevant paragraph in the 
  document collection. Returns the title and text of the most 
  relevant paragraph matching your query.

To search, format your response as:
Thought: <reasoning about what to search for>
Search: <your search query>

After receiving the search result (shown as "Observation"), you 
can search again or provide your final answer.

When you have enough information to answer:
Thought: <your final reasoning>
Answer: <concise final answer>

Guidelines:
- Think step by step about what information you need.
- Use specific entity names or keywords in your search queries.
- You may search multiple times to gather all needed information.
- Keep your final answer concise (a few words or a short phrase).
\end{verbatim}
\end{small}

For Category A (MATH-500), a standard mathematical reasoning prompt is used without tool instructions.

\section{Data Splits and Sampling}
\label{app:data_splits}

We sample from HotPotQA's training split for RL/SFT training and from the development split for evaluation, since test set labels are not publicly available. Questions are filtered by type (comparison vs.\ bridge) using the provided annotations. We use a fixed random seed (42) for reproducibility. We verify that no test questions appear in the training set. MATH-500 problems are sampled with stratification across difficulty levels and topics.

\section{Training and Evaluation Details}
\label{app:training}
Full SFT and RL hyperparameters, the expert-trajectory construction procedure, and evaluation grid details are included in the following subsections.

\subsection{SFT Expert Trajectory Construction}
\label{app:sft_trajectories}

For each training problem, we construct one expert trajectory using the gold supporting-fact annotations. HotPotQA provides a list of (paragraph title, sentence index) pairs identifying the evidence needed to answer each question.

For \textbf{comparison questions}, the expert searches for each of the two gold paragraph titles, reads both, then answers:

\begin{small}
\begin{verbatim}
Thought: I need to compare [entity1] and [entity2]. Let me 
search for [entity1] first.
Search: [gold_title_1]
Observation: [gold_paragraph_1]
Thought: Now let me search for [entity2].
Search: [gold_title_2]
Observation: [gold_paragraph_2]
Thought: Based on the information, [reasoning].
Answer: [gold_answer]
\end{verbatim}
\end{small}

For \textbf{bridge questions}, the expert searches for the first-hop entity, extracts the bridge entity, then searches for the second-hop:

\begin{small}
\begin{verbatim}
Thought: I need to find [target info]. Let me start by 
searching for [first entity].
Search: [gold_title_1]
Observation: [gold_paragraph_1]
Thought: From this, I learn that [bridge entity] is relevant.
Search: [gold_title_2]
Observation: [gold_paragraph_2]
Thought: Now I can answer. [reasoning].
Answer: [gold_answer]
\end{verbatim}
\end{small}

During SFT training, loss is computed only on model-generated tokens; Observation tokens are masked with label $= -100$.

\section{BM25 Retrieval Details}
\label{app:bm25}

We use the \texttt{rank\_bm25} Python library with default parameters ($k_1 = 1.5$, $b = 0.75$). Each paragraph is represented as its title concatenated with its full text, tokenized by whitespace and lowercased. The search query is processed identically. The paragraph with the highest BM25 score is returned. Ties are broken by paragraph index (deterministic).

\section{RL Training Curves and Implementation Notes}
\label{app:training_curves}

The $\pirl$ checkpoint is produced by 10 GRPO epochs over the 200-problem HotPotQA training set, yielding 1000 gradient steps. Average per-batch reward (fraction of the 16 trajectories per batch that achieve exact-match on the gold answer) fluctuates between 0.0 and 0.94 throughout training, with a long-run mean near 0.33--0.50. The batch-to-batch variance is substantial because each batch contains only 2 problems $\times$ 8 rollouts and the reward is binary; we see no monotonic upward trend of per-batch success rate after epoch 5, but the PASS@(64,5) of saved checkpoints does improve between step 850 (0.78 on Category C) and step 1000 (0.81), indicating that late-stage updates are still pushing the capability-boundary outward even as pass@1 appears noisy.

We compute the KL penalty using a per-token $k_3$ estimator, $\text{KL}(\pi_{\text{new}} \| \pi_{\text{ref}}) \approx \mathbb{E}\!\left[\exp(\log \pi_{\text{ref}} - \log \pi_{\text{new}}) - (\log \pi_{\text{ref}} - \log \pi_{\text{new}}) - 1\right]$, clamped at the per-token level and averaged over the model-generated tokens of each trajectory. The reference distribution is the LoRA-disabled forward pass of the same model, so no separate reference model is materialized. Across training, the estimator remains non-negative and in the single-digit nats range per trajectory, consistent with a policy that is drifting from the base distribution but not diverging catastrophically.

Training was performed on a single 48GB GPU with 8-bit Adam and LoRA rank 16 ($\sim 40$M trainable parameters). Total wall-clock time for the 1000-step run was approximately 55 hours.

\section{Full PASS@(k, T) Tables}
\label{app:full_tables}\label{app:fulltable}

Table~\ref{tab:full_pass_kt} reports the complete PASS@(k,T) values averaged over the problems in each category, for all three models and all $(k, T)$ combinations in our evaluation grid. For Category A (MATH-500) only $T = 0$ is meaningful, since the tool is unavailable.

\begin{table}[h]
\centering
\small
\caption{Complete PASS@(k,T) values for $k \in \{1, 4, 16, 64\}$ and $T \in \{0, 1, 2, 3, 5\}$.}
\label{tab:full_pass_kt}
\begin{tabular}{llcccccc}
\toprule
Model & Category & $T$ & $k=1$ & $k=4$ & $k=16$ & $k=64$ \\
\midrule
$\pibase$ & A (MATH-500) & 0 & 0.425 & 0.635 & 0.762 & 0.840 \\
$\pibase$ & B (Comparison) & 0 & 0.015 & 0.028 & 0.049 & 0.060 \\
$\pibase$ & B (Comparison) & 1 & 0.277 & 0.556 & 0.751 & 0.840 \\
$\pibase$ & B (Comparison) & 2 & 0.316 & 0.571 & 0.730 & 0.820 \\
$\pibase$ & B (Comparison) & 3 & 0.201 & 0.447 & 0.683 & 0.810 \\
$\pibase$ & B (Comparison) & 5 & 0.186 & 0.434 & 0.690 & 0.820 \\
$\pibase$ & C (Bridge) & 0 & 0.011 & 0.016 & 0.022 & 0.030 \\
$\pibase$ & C (Bridge) & 1 & 0.307 & 0.492 & 0.609 & 0.670 \\
$\pibase$ & C (Bridge) & 2 & 0.341 & 0.509 & 0.654 & 0.760 \\
$\pibase$ & C (Bridge) & 3 & 0.359 & 0.514 & 0.651 & 0.760 \\
$\pibase$ & C (Bridge) & 5 & 0.363 & 0.535 & 0.675 & 0.770 \\
\midrule
$\pisft$ & A (MATH-500) & 0 & 0.427 & 0.655 & 0.774 & 0.890 \\
$\pisft$ & B (Comparison) & 0 & 0.267 & 0.437 & 0.588 & 0.670 \\
$\pisft$ & B (Comparison) & 1 & 0.392 & 0.632 & 0.781 & 0.850 \\
$\pisft$ & B (Comparison) & 2 & 0.410 & 0.693 & 0.816 & 0.870 \\
$\pisft$ & B (Comparison) & 3 & 0.386 & 0.666 & 0.796 & 0.860 \\
$\pisft$ & B (Comparison) & 5 & 0.398 & 0.679 & 0.808 & 0.850 \\
$\pisft$ & C (Bridge) & 0 & 0.056 & 0.106 & 0.155 & 0.200 \\
$\pisft$ & C (Bridge) & 1 & 0.146 & 0.264 & 0.401 & 0.540 \\
$\pisft$ & C (Bridge) & 2 & 0.278 & 0.484 & 0.625 & 0.720 \\
$\pisft$ & C (Bridge) & 3 & 0.268 & 0.494 & 0.658 & 0.750 \\
$\pisft$ & C (Bridge) & 5 & 0.274 & 0.489 & 0.641 & 0.730 \\
\midrule
$\pirl$ & A (MATH-500) & 0 & 0.433 & 0.654 & 0.768 & 0.840 \\
$\pirl$ & B (Comparison) & 0 & 0.083 & 0.145 & 0.209 & 0.240 \\
$\pirl$ & B (Comparison) & 1 & 0.289 & 0.600 & 0.790 & 0.850 \\
$\pirl$ & B (Comparison) & 2 & 0.299 & 0.602 & 0.777 & 0.850 \\
$\pirl$ & B (Comparison) & 3 & 0.281 & 0.597 & 0.794 & 0.850 \\
$\pirl$ & B (Comparison) & 5 & 0.278 & 0.603 & 0.798 & 0.860 \\
$\pirl$ & C (Bridge) & 0 & 0.017 & 0.037 & 0.063 & 0.090 \\
$\pirl$ & C (Bridge) & 1 & 0.272 & 0.492 & 0.626 & 0.710 \\
$\pirl$ & C (Bridge) & 2 & 0.327 & 0.543 & 0.686 & \textbf{0.800} \\
$\pirl$ & C (Bridge) & 3 & 0.330 & 0.543 & 0.686 & 0.780 \\
$\pirl$ & C (Bridge) & 5 & 0.335 & 0.541 & 0.690 & \textbf{0.810} \\
\bottomrule
\end{tabular}
\end{table}

\section{Formal Properties of PASS@(k, T)}
\label{app:properties}

\begin{proposition}[Monotonicity]
For any problem $q$ and policy $\pi$: $\textsc{Pass}@(k_1, T_1)(q, \pi) \leq \textsc{Pass}@(k_2, T_2)(q, \pi)$ whenever $k_1 \leq k_2$ and $T_1 \leq T_2$.
\end{proposition}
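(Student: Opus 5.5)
We would prove the inequality by splitting it into a $k$-step and a $T$-step and chaining them:
\[
\textsc{Pass}@(k_1,T_1) \le \textsc{Pass}@(k_2,T_1) \le \textsc{Pass}@(k_2,T_2).
\]
Throughout we work with the population quantity in Definition~\ref{def:passkt}. Let $p_T(q,\pi)$ be the probability that a single depth-$T$ rollout of $\pi$ on $q$ is correct. Then
\[
\textsc{Pass}@(k,T)(q,\pi) = 1-(1-p_T)^k .
\]
We also check that the hypergeometric form has the same monotonicity in $k$.

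\emph{Monotonicity in $k$ (fixed $T$).} In the probability form, $(1-p_T)^k$ is non-increasing in $k$ because $0\le 1-p_T\le 1$, so the first inequality is immediate. For the estimator, with $c=c_T$, write
\[
\frac{\binom{n-c}{k}}{\binom{n}{k}}=\prod_{i=0}^{k-1}\frac{n-c-i}{n-i}.
\]
Every factor lies in $[0,1]$, so appending factors cannot increase the product. Once $k>n-c$, a zero factor appears, which is consistent with the convention that the value is then $1$. Hence $1-\binom{n-c}{k}/\binom{n}{k}$ is non-decreasing in $k$.

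\emph{Monotonicity in $T$ (fixed $k$).} Since $x\mapsto 1-(1-x)^k$ is non-decreasing on $[0,1]$, it suffices to show $p_{T_1}\le p_{T_2}$. We would make precise the remark after the monotonicity paragraph that ``unused rounds are simply not exercised''. Assume the policy's action distribution at state $s_t$ does not depend on the budget, and that the budget acts only by truncating the interaction. Then depth-$T_1$ and depth-$T_2$ rollouts can be coupled by sharing the policy's randomness, since the environment is deterministic. Under this coupling the two rollouts agree for the first $T_1$ rounds. Consequently, any depth-$T_1$ trajectory that emits a correct \texttt{Answer} within its first $T_1$ rounds is reproduced verbatim, and is correct, under budget $T_2$. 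Taking probabilities of the success events then gives $p_{T_1}\le p_{T_2}$.

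The $T$-step is the main obstacle, for two reasons.
\begin{enumerate}[label=(\roman*)]
\item The coupling needs the success event at depth $T_1$ to be contained in the one at depth $T_2$. This fails if the policy sees the remaining budget, or if truncation at $T_1$ forces an answer that the unconstrained $T_2$ run would have replaced by another search. We would handle this by stating the no-budget-conditioning assumption explicitly, defining success at depth $T$ as ``a correct answer is produced within $T$ rounds'', and noting that the paper's protocol is meant to satisfy this.
\item The claim holds for the population quantity $p_T$ only, not for the empirical estimator. Each $T$ uses independent rollouts, so the estimated $c_T$ need not be monotone in $T$; indeed Table~\ref{tab:full_pass_kt} shows non-monotone cells, such as $\pibase$ on Category B. We would say explicitly that the proposition concerns the quantity being estimated.
\end{enumerate}
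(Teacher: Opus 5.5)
Your proof is correct and follows the same route as the paper's: monotonicity in $k$ comes from the standard hypergeometric pass@$k$ property, and monotonicity in $T$ holds because a run that answers correctly within $T_1$ rounds is unchanged by a larger budget, so every success at $T_1$ is a success at $T_2$. Your version is the more careful of the two, because the paper asserts $c_{T_2}\ge c_{T_1}$ ``pointwise on the same sample'' even though each $(q,T)$ cell is an independent rollout, and it leaves the no-budget-conditioning and pure-truncation assumption implicit; one correction to your caveat (ii) is that dips such as $0.316\to 0.201$ for $\pibase$ on Category~B at $k=1$ in Table~\ref{tab:full_pass_kt} are far larger than the sampling noise of $100\times 64$ rollouts, so they more likely show your caveat (i) failing in the actual protocol than the estimator effect you attribute them to.
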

\begin{proof}
Write $\textsc{Pass}@(k, T)$ using the unbiased hypergeometric estimator (Def.~\ref{def:passkt}): $1 - \binom{n - c_T}{k}/\binom{n}{k}$. Fix $T$. Since $\binom{n - c_T}{k}/\binom{n}{k}$ is monotone non-increasing in $k$ for fixed $c_T$ (this is the standard pass@$k$ monotonicity), $\textsc{Pass}@(k, T)$ is non-decreasing in $k$. Now fix $k$. Any trajectory $\tau$ that terminates with the correct answer after at most $T_1$ interaction rounds is a valid trajectory of length at most $T_2 > T_1$ (the agent simply never uses the extra rounds), so $c_{T_2} \geq c_{T_1}$ pointwise on the same sample. Taking expectation over the sampling procedure and applying the same estimator gives $\textsc{Pass}@(k, T_2) \geq \textsc{Pass}@(k, T_1)$. Combining the two gives the joint monotonicity.
\end{proof}

\begin{proposition}[Reduction to static pass@$k$]
For any $q$ and $\pi$, $\textsc{Pass}@(k, 0)(q, \pi) = \text{pass}@k(q, \pi)$ where $\text{pass}@k$ is the standard single-context metric of \citet{chen2021codex}.
\end{proposition}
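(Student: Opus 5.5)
The plan is to unfold Definition~\ref{def:passkt} at $T=0$ and show that it coincides, term by term, with the definition of pass@$k$ in \citet{chen2021codex}. This works at both levels: the population probability and the hypergeometric estimator. First I will pin down what a depth-$0$ trajectory is. With $T=0$ the agent receives no environment rounds, so a trajectory $\tau=(s_0,a_0)$ has $a_0$ equal to a final answer emitted from the initial state $s_0$, which contains only the task $q$. No tool call can be executed and no observation $o_t$ is appended. Consequently, the trajectory distribution $\pi(\cdot\mid q;0)$ is exactly the single-context output distribution $\pi(\cdot\mid q)$ of the model on prompt $q$. Likewise, the success event $\mathrm{EM}(\mathsf{answer}(\tau),y^\star(q))=1$ is exactly the per-sample correctness event used in static pass@$k$.

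Given this identification, the population form follows directly. $\Pr[\exists i\in[k]:\tau_i \text{ correct}]$ with $\tau_i$ drawn i.i.d.\ from $\pi(\cdot\mid q;0)$ equals $1-(1-p)^k$, where $p$ is the single-sample success probability. This is precisely the population pass@$k$. For the estimator form, the $n$ rollouts at depth $0$ are $n$ i.i.d.\ single-context samples, so $c_0(q,\pi;n)$ is the number of correct samples $c$ in the sense of \citet{chen2021codex}. Substituting gives $1-\binom{n-c_0}{k}/\binom{n}{k}$, which is the Chen et al.\ unbiased estimator verbatim. The convention $\binom{n-c_0}{k}=0$ for $k>n-c_0$ also matches theirs. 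Both forms therefore agree.

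The main obstacle is the first step: making rigorous that $\pi(\cdot\mid q;0)$ carries no residual dependence on the tool interface. One issue is that the system prompt advertises \texttt{search}, so the model might emit a \texttt{Search:} action at $T=0$. I will handle this by the convention implicit in the setup. When the budget is exhausted, any non-answer action is scored as incorrect, or equivalently the final action is forced to be an answer. Under either convention, the induced success indicator is a function of the model's single-context generation alone. pass@$k$ is then understood relative to that same prompt, which for Category~A is the tool-free prompt, so the equality holds with no further argument.
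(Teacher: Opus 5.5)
Your proposal is correct and follows the same route as the paper's proof: at $T=0$ a trajectory is a single completion conditioned on the task, so $c_0$ is exactly Chen et al.'s count $c$, and the hypergeometric estimator coincides with pass@$k$. Two points in your version are more careful than the paper, which simply asserts that the agent must answer without the tool: you treat the population form $1-(1-p)^k$ and the estimator separately, and you state an explicit convention for stray \texttt{Search:} actions at $T=0$.
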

\begin{proof}
When $T = 0$, the agent must emit a final answer without using the tool. The trajectory is therefore a single model completion conditioned on the task description, identical to the sampling procedure used by static pass@$k$. The unbiased hypergeometric estimator $1 - \binom{n - c}{k}/\binom{n}{k}$ with $c = c_0$ is the definition of pass@$k$.
\end{proof}

\begin{proposition}[Capability expansion implies a strict boundary increase]
If $\pirl$ achieves capability expansion over $\pibase$ at depth $T$ (Definition 3), then $|\mathcal{B}_T(\pirl)| \geq |\mathcal{B}_T(\pibase) \cap \mathcal{B}_T(\pirl)| + 1 > |\mathcal{B}_T(\pibase) \cap \mathcal{B}_T(\pirl)|$, strictly enlarging the intersection cardinality. (Capability expansion may coexist with efficiency loss: $\pirl$ can newly solve some problems and also have lower $\textsc{Pass}@(1,T)$ on problems in the intersection.)
\end{proposition}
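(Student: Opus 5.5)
The argument is purely set-theoretic. Capability expansion at depth $T$ means, by definition, that $\mathcal{B}_T(\pirl) \setminus \mathcal{B}_T(\pibase) \neq \emptyset$. I will fix a witness $q^\star$ in this set difference and then decompose $\mathcal{B}_T(\pirl)$ as the disjoint union
\[
\mathcal{B}_T(\pirl) = \bigl(\mathcal{B}_T(\pibase) \cap \mathcal{B}_T(\pirl)\bigr) \,\cup\, \bigl(\mathcal{B}_T(\pirl) \setminus \mathcal{B}_T(\pibase)\bigr).
\]
Additivity of cardinality over this partition gives $|\mathcal{B}_T(\pirl)| = |\mathcal{B}_T(\pibase) \cap \mathcal{B}_T(\pirl)| + |\mathcal{B}_T(\pirl) \setminus \mathcal{B}_T(\pibase)|$. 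Since the second term is at least $1$ (it contains $q^\star$), the first inequality of the statement follows, and the strict inequality is then immediate.

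The only point needing care is finiteness. The inequality chain is meaningful as stated when $\mathcal{Q}$ is finite, which holds in the paper's setting (100 problems per category, 300 in total). If $\mathcal{Q}$ is infinite, I would either restrict to a finite evaluation set or rephrase the claim as strict set inclusion: $\mathcal{B}_T(\pibase) \cap \mathcal{B}_T(\pirl) \subsetneq \mathcal{B}_T(\pirl)$, witnessed by $q^\star$. I expect this to be the only real subtlety, and it is a matter of phrasing rather than mathematics. Note that nothing about the estimator in Def.~\ref{def:passkt} or about monotonicity is needed; the statement follows directly from the definitions of $\mathcal{B}_T$ and capability expansion.

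For the parenthetical remark, I will argue that it is consistent by exhibiting an example rather than proving it as a theorem. Take two problems. On $q_1$, $\pibase$ has single-shot success probability $0.9$ and $\pirl$ has $0.5$. On $q_2$, $\pibase$ has $0$ and $\pirl$ has $0.1$. Then $q_2$ witnesses expansion, and $\textsc{Pass}@(1,T)(q_1,\pirl) < \textsc{Pass}@(1,T)(q_1,\pibase)$ on $q_1$, which lies in the intersection. This matches the Category C observation that $\pibase$ leads at $k=1$.
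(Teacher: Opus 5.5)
Your proof is correct and is the paper's argument made explicit: the paper simply calls the bound immediate from $\mathcal{B}_T(\pirl)\setminus\mathcal{B}_T(\pibase)\neq\emptyset$, which is exactly your disjoint-union count with a witness $q^\star$. Your finiteness caveat is a sound refinement the paper omits. For the parenthetical, the paper points to the Category~C numbers ($0.33$ vs.\ $0.36$ at $k=1$, diverging at $k=64$), whereas you construct a two-problem example; either suffices to show the two phenomena can coexist.
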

\begin{proof}
Immediate from $\mathcal{B}_T(\pirl) \setminus \mathcal{B}_T(\pibase) \neq \emptyset$. The parenthetical remark is important for interpreting our Category C numbers: at $k = 1$, $\pirl$ and $\pibase$ have similar values on Cat C ($0.33$ vs.\ $0.36$) but diverge at $k = 64$ because the expansion happens on a small set of problems at the capability-boundary limit rather than on the many-solved intersection.
\end{proof}

\paragraph{Consistency with $n$.}
The unbiased hypergeometric estimator is invariant under increasing $n$ (we computed the same value from $n = 32$ subsets of our $n = 64$ rollouts and obtained agreement within binomial noise, see Table~\ref{tab:saturation_full}).

\section{Exploration-Bonus Ablation}
\label{app:exploration}

Our main $\pirl$ checkpoint is trained with a pure task-completion reward: $R(\tau) = 1$ if the final answer exactly matches the gold answer, $0$ otherwise. This signal does not explicitly encourage the agent to explore novel retrieval strategies. We therefore train a variant $\piexplore$ with an augmented reward
\begin{equation*}
R_{\text{aug}}(\tau) = R_{\text{task}}(\tau) + \lambda \cdot R_{\text{explore}}(\tau), \qquad \lambda = 0.1,
\end{equation*}
where $R_{\text{explore}}(\tau) = 1$ if the (normalized) set of paragraph titles retrieved in $\tau$ does not appear in any other trajectory of the current training batch, and $0$ otherwise. All other hyperparameters are identical to $\pirl$'s.

The $\piexplore$ run was interrupted by an external GPU memory conflict on our shared cluster at epoch 8/10 (step 700/1000). Evaluating the latest completed checkpoint yields Category C PASS@(64,5) $= 0.78$, with a net capability-boundary expansion of $+1$ over $\pibase$ (4 newly solvable bridge problems, 3 regressions). This is below the $\pirl$ run's net $+4$ (5 new, 1 regression) at $0.81$, but the two runs did not see matched compute, so we report this as indicative rather than definitive. Two observations are worth recording despite the truncated schedule. First, the exploration bonus does not appear \emph{necessary} to unlock capability expansion: $\pirl$ achieves a larger expansion under a simpler reward. Second, when the training corpus is small relative to the model's base strategic diversity (§6(b) reports $40$--$45$ unique query sequences per problem under $\pibase$), an explicit information-state novelty bonus has little headroom to push the policy further (the strategy-diversity measurement in §6(b) shows that $\pirl$ already inherits and slightly extends $\pibase$'s per-problem diversity). A fair rerun of $\piexplore$ for the full schedule is a priority for a revised version.

\section{Additional Robustness Results}
\label{app:robustness}

\paragraph{Intermediate-checkpoint consistency.}
Evaluating the RL policy at step 850 rather than at step 1000 reproduces every directional finding reported in the main text: RL $>$ base $>$ SFT on Category C PASS@(64,5), no capability-boundary change on Category A, and a modest base-surpassing effect on Category B. PASS@(64,5) values at step 850 / step 1000 are respectively: Category A $0.85 / 0.84$, Category B $0.83 / 0.86$, Category C $0.78 / 0.81$. The final checkpoint therefore has a cleaner Category C story but the signs are stable across the last 150 GRPO steps.

\paragraph{Scale, temperature, and $T > T_{\text{train}}$.}
Within the submission window we did not repeat the full pipeline on Qwen2.5-14B-Instruct or LLaMA-3.1-8B-Instruct, did not sweep the sampling temperature away from $0.7$, and did not evaluate at interaction depths beyond the training budget $T_{\text{train}} = 5$. These robustness checks, particularly the temperature sweep, which controls for the possibility that RL training mechanically lowers the output entropy and thereby inflates PASS@(1,T) while deflating PASS@(64,T), are the most important next runs. The data-generation, checkpointing, and evaluation infrastructure is already in place; the extensions require only additional compute, and we will include them in a revised version.

\paragraph{Public-checkpoint external validity.}
An orthogonal robustness check is to apply PASS@(k,T) to publicly released agentic RL checkpoints such as Search-R1 \citep{jin2025searchr1} or Agent-R1 \citep{zhang2025agentr1}, using the same HotPotQA distractor setup, and test whether the three-layer pattern (Category A inert, Category B modest, Category C substantive) holds for well-tuned, production-quality RL pipelines. Such an external validation would trade off our clean $\pirl/\pisft$ matched-data ablation for stronger evidence that the pattern generalizes beyond our 200-problem custom recipe; we leave this to a revised version.

\paragraph{Per-category saturation table.}
Table~\ref{tab:saturation_full} reports interaction saturation points $T^\star$ at multiple $\epsilon$ thresholds, using $k_{\max} = 64$. The main-text number ($\epsilon = 0.02$) is stable to perturbations of $\epsilon \in [0.01, 0.05]$: $\pirl$ never saturates later than $\pisft$ on Category C, but the plateau height is always higher.

\begin{table}[h]
\centering
\small
\caption{Interaction saturation $T^\star(\pi; \epsilon)$ at $k_{\max} = 64$ for multiple tolerance levels. ``$> 5$'' indicates the gap remained above $\epsilon$ at the largest $T$ evaluated.}
\label{tab:saturation_full}
\begin{tabular}{llccc}
\toprule
Category & $\epsilon$ & $T^\star(\pibase)$ & $T^\star(\pisft)$ & $T^\star(\pirl)$ \\
\midrule
B (Comparison) & $0.01$ & 2 & 2 & 2 \\
B (Comparison) & $0.02$ & 1 & 2 & 1 \\
B (Comparison) & $0.05$ & 1 & 1 & 1 \\
C (Bridge)     & $0.01$ & 3 & $> 5$ & 3 \\
C (Bridge)     & $0.02$ & 2 & 3 & 2 \\
C (Bridge)     & $0.05$ & 2 & 2 & 2 \\
\bottomrule
\end{tabular}
\end{table}

\section{Training Curves and Reward Dynamics}
\label{app:training_curves_detail}

We summarize selected statistics over the 1000-step GRPO training run for $\pirl$ in text form, since the full training logs are released with the code. Per-batch reward (fraction of the 16 trajectories per batch that exactly match the gold answer) starts near $0.20$ at step 50, rises noisily to a long-run mean of $0.33$--$0.50$ by step 300, and fluctuates in the range $[0.00, 0.94]$ thereafter. The fluctuation is large because each batch contains only $2$ problems $\times$ $8$ rollouts each, and the reward is binary. Between step 850 and step 1000 (the last 150 steps of training), the mean per-batch reward shows no systematic upward trend, yet the PASS@(64,5) on Category C of the saved checkpoints improves from $0.78$ (step 850) to $0.81$ (step 1000). This gap is consistent with the reweighting mechanism of \S\ref{sec:mechanism}: late-stage updates shift mass between existing strategies without changing the batch-mean success rate meaningfully, but the shift is concentrated on the high-sample capability-boundary tail.

The $k_3$ KL estimator stays strictly non-negative in expectation (by construction) and hovers in the single-digit-nats per trajectory range ($\sim 4$--$15$ nats, or $\sim 0.03$--$0.08$ nats per model-generated token), indicating the policy is drifting from $\pibase$ but not diverging catastrophically. An early version of our code used the $k_1$ estimator $\log(\pi_{\text{new}}/\pi_{\text{ref}})$ summed over trajectory tokens, which produced occasional large negative values and added the wrong sign to the loss; we corrected this to the per-token $k_3$ estimator before the runs reported in the main text.

\section{Mechanism Protocol Details}
\label{app:mechanism_protocol}

\paragraph{Perplexity decomposition.}
For each of the 200 successful $\pirl$ Category-C trajectories we reparse the trajectory into three spans: \texttt{search} (the content after ``Search:'' on any Search line, up to end-of-line), \texttt{reason} (the content after ``Thought:'' and ``Answer:''), and \texttt{observation} (retrieved paragraphs appended by the environment). Observation spans are excluded from the perplexity computation since they are not policy choices. For each non-observation span we compute $\pibase$'s token-level log-probability on the exact tokens emitted by $\pirl$, conditioning on the full preceding context. We then average the negative log-probabilities over tokens in each span class and exponentiate. The estimator reduces to the standard definition of perplexity on that token class under the base model.

\paragraph{Cross-policy evaluation.}
For each Category C problem we select the first successful trajectory of the specified \emph{strategy source} (either $\pibase$ or $\pirl$) and extract the ordered list of $(query_i, observation_i)$ pairs. We then form a prompt:
\begin{small}
\begin{verbatim}
You are given a question and the results of several searches.
Based on the search results, answer the question concisely.

Question: {question}

Search 1 (query: "{query_1}"):
{observation_1}

Search 2 (query: "{query_2}"):
{observation_2}

... (more search results) ...

Based on the above search results, provide a concise answer.
Answer:
\end{verbatim}
\end{small}
and have the \emph{reasoning source} model generate up to 50 tokens, stopping at the first newline. The generated text is normalized (lowercase, strip articles, strip punctuation) and compared to the gold answer. Importantly, this protocol uses a different prompt format from the in-loop ReAct evaluation: there is no \texttt{Thought:} cue, no turn-by-turn interleaving, and the agent is not asked to issue further searches. This is why the absolute EM rates in Table~\ref{tab:cross_policy} are much lower than the in-loop $\textsc{Pass}@(1, 5)$ values in Table~\ref{tab:full_pass_kt}: the cross-policy prompt is deliberately stripped to isolate the reasoning-only contribution. The value of the experiment is in the relative ordering across the four conditions, not the absolute numbers.

\begin{table}[h]
\centering
\small
\caption{Cross-policy evaluation on Category C. Denominators differ because each condition restricts to problems for which the strategy source has a successful trajectory to replay.}
\label{tab:cross_policy}
\begin{tabular}{llcc}
\toprule
Strategy Source & Reasoning Source & EM Accuracy & Correct / Attempted \\
\midrule
$\pibase$ & $\pibase$ & $0.130$ & $10 / 77$ \\
$\pirl$   & $\pirl$   & $0.148$ & $12 / 81$ \\
$\pirl$   & $\pibase$ & $0.148$ & $12 / 81$ \\
$\pibase$ & $\pirl$   & $0.143$ & $11 / 77$ \\
\bottomrule
\end{tabular}
\end{table}

\paragraph{Strategy diversity normalization.}
When counting unique query sequences per problem, we lowercase each query, strip surrounding punctuation and whitespace, collapse internal whitespace, and then form the ordered tuple of normalized queries as the sequence key. Without this normalization, cosmetic variations such as trailing periods or capitalization differences would count as distinct strategies and inflate the novelty fraction; with it, the $3\times$ diversity collapse of $\pisft$ vs.\ $\pibase$ is preserved, indicating that the effect is not a tokenization artifact.

\section{Qualitative Analysis of Capability Expansion}
\label{app:qualitative}

To check that the 5:1 per-problem asymmetry of $\pirl$ against $\pibase$ on Category C is not an accident of a single quirky subset of bridge questions, we inspect the five problems in $|\text{only } \pirl|$ and the single problem in $|\text{only } \pibase|$. In all five $\pirl$-only problems, the base agent's 64 samples at $T = 5$ generate a plausible first-hop retrieval (the correct bridge entity appears in the retrieved paragraph) but fail to chain it correctly into a second-hop query: either the agent re-asks the first question rephrased, issues an over-specific query that misses the second gold paragraph, or extracts the wrong bridge entity from the first observation. The RL agent's successful trajectories on the same five problems show a consistent pattern: the first search is nearly identical to the base model's (consistent with the perplexity decomposition's finding that RL queries are in-distribution for $\pibase$), but the reasoning text between Search 1 and Search 2 is more compressed and extracts the bridge entity by a direct copy rather than a paraphrase. The single $\pibase$-only problem is the mirror image: the base agent's first search returns both gold paragraphs in sequence by chance, so no composition is needed; the RL agent samples a slightly longer query that retrieves a distractor on its first attempt. We take the asymmetry as evidence that $\pirl$'s gain on Cat~C is concentrated in the ``how to use the first observation'' step, consistent with the Section~\ref{sec:mechanism} mechanism findings.

\section{Computational Cost}
\label{app:cost}

All experiments run on a shared cluster with 8$\times$ NVIDIA RTX 6000 Ada (48~GB) GPUs. Training takes roughly $55$ hours on a single GPU for $\pirl$ (LoRA rank 16, 8-bit Adam, $\sim 40$M trainable params), dominated by the $2 \times 8 = 16$ rollouts per gradient step with BM25 retrieval in the inner loop. SFT is inexpensive ($\sim 12$ minutes for 3 epochs over 200 expert trajectories). Evaluation is the dominant cost at inference time: for each of the four models, $(100 \text{ problems}) \times (64 \text{ samples}) \times (5 \text{ depths}) = 32000$ trajectories, at approximately $10$--$30$ it/s using vLLM continuous batching on a single GPU, totals $\sim 30$--$60$ GPU-minutes per (model, category) cell. Full pipeline wall-clock for all four models across all three categories (excluding training) is about $8$ GPU-hours.

\section{Full Marginal-Value Data}
\label{app:marginal}

This appendix gives the complete marginal-value data underlying \S\ref{sec:marginal}, Figure~\ref{fig:marginal}, and the budget-allocation recommendation.

\subsection{Fraction-ratio heatmap across the $(k, T)$ grid}
\label{app:marginal_heatmap}

Figure~\ref{fig:marginal_heatmap} shows the ratio $\Delta_T / (\Delta_T + \Delta_k)$ on a $2 \times 3$ grid of heatmaps (rows = Categories B and C, columns = models). Warm cells (ratio $> 0.5$) mean depth is the more valuable direction at that $(k, T)$ operating point; cool cells (ratio $< 0.5$) mean sampling is. Cells where both $\Delta_T$ and $\Delta_k$ fall below $0.005$ are rendered gray (``sat'') to avoid division-by-small-number artifacts. Category A is omitted because $\Delta_T = 0$ throughout (no tool access).

\begin{figure}[h]
\centering
\includegraphics[width=0.95\textwidth]{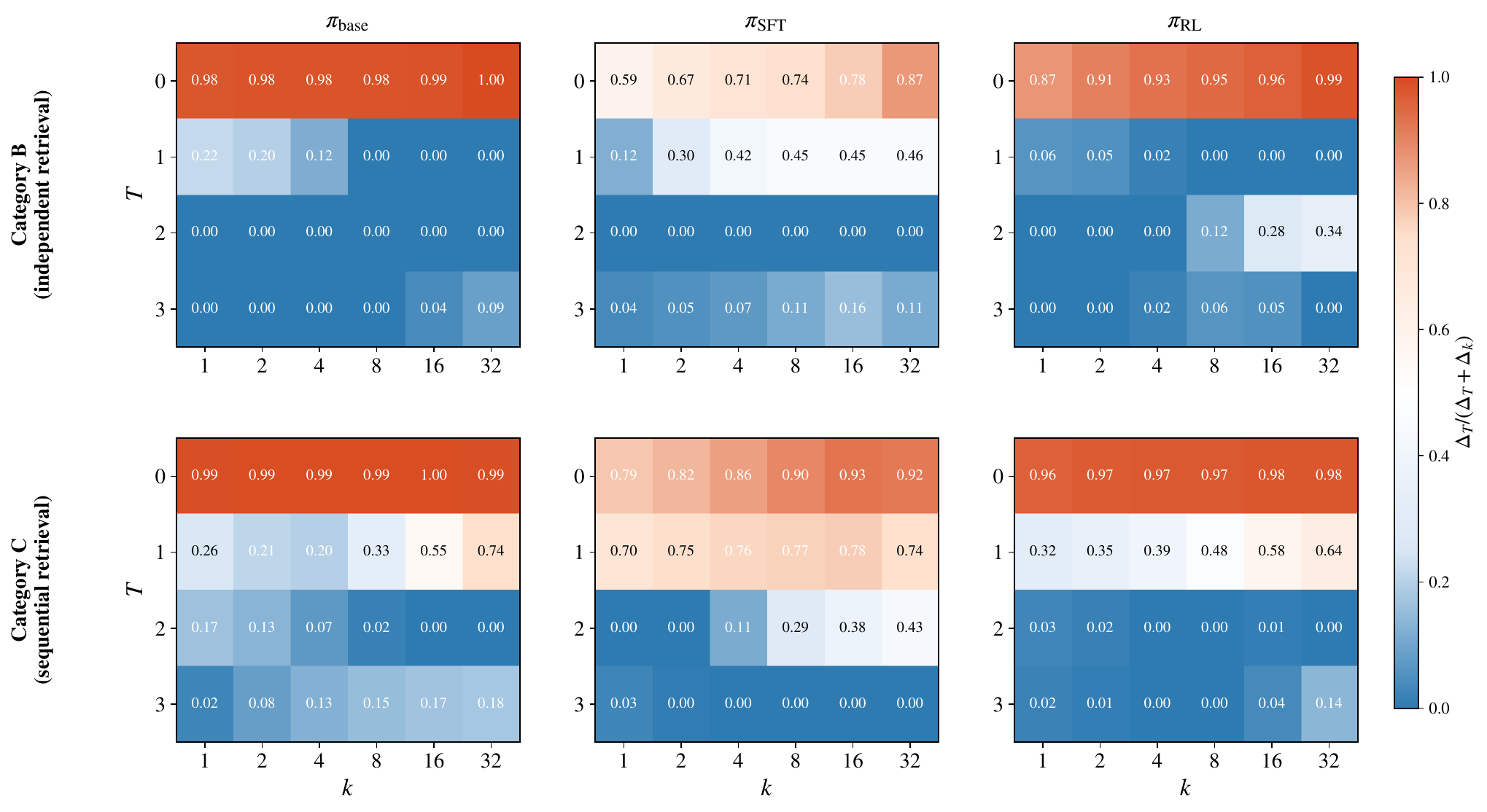}
\caption{Fraction-ratio heatmap $\Delta_T / (\Delta_T + \Delta_k)$ per operating point. Rows are Categories B and C; columns are models ($\pibase$, $\pisft$, $\pirl$). Red cells favor interaction depth; blue cells favor sampling breadth. The warm interior of each Category~C panel (especially the top-left corner, where both $k$ and $T$ are small) is the region where deepening $T$ returns the most, and $\pirl$'s warm region is larger than $\pibase$'s, matching the widening-gap behavior of Figure~\ref{fig:passk}.}
\label{fig:marginal_heatmap}
\end{figure}

\subsection{Budget allocation summary}
\label{app:marginal_budget}


\begin{table}[h]
\centering
\small
\caption{Budget-allocation recommendations derived from $\Delta_T$ and $\Delta_k$. `Recommended $T$' is the smallest depth beyond which $\Delta_T$ drops below $\Delta_k$ at $k = 4$; `cross-over $k$' is the smallest $k$ at which $\Delta_T < \Delta_k$ holds at $T = 1$. Cells marked `---' are saturated or inapplicable.}
\label{tab:budget}
\begin{tabular}{llccl}
\toprule
Category & Model & Rec.\ $T$ & Cross-over $k$ & Budget strategy \\
\midrule
A & $\pi_{\mathrm{base}}$ & 0 & --- & tool unavailable; spend budget on $k$ \\
A & $\pi_{\mathrm{SFT}}$ & 0 & --- & tool unavailable; spend budget on $k$ \\
A & $\pi_{\mathrm{RL}}$ & 0 & --- & tool unavailable; spend budget on $k$ \\
\addlinespace[2pt]
B & $\pi_{\mathrm{base}}$ & 2 & 1 & deepen to $T = 2$, then grow $k$ \\
B & $\pi_{\mathrm{SFT}}$ & 2 & 1 & deepen to $T = 2$, then grow $k$ \\
B & $\pi_{\mathrm{RL}}$ & 2 & 1 & deepen to $T = 2$, then grow $k$ \\
\addlinespace[2pt]
C & $\pi_{\mathrm{base}}$ & 2 & 1 & deepen to $T = 2$, then grow $k$ \\
C & $\pi_{\mathrm{SFT}}$ & 3 & --- & deepen to $T = 3$, then grow $k$ \\
C & $\pi_{\mathrm{RL}}$ & 2 & 1 & deepen to $T = 2$, then grow $k$ \\
\addlinespace[2pt]
\bottomrule
\end{tabular}
\end{table}

\begin{table}[h]
\centering
\small
\caption{Category B (HotPotQA comparison): marginal improvements $\Delta_k$ and per-round $\Delta_T$ at each $(k, T)$ operating point. $\Delta_T$ for $T=3$ is the per-round normalization $(\mathrm{PASS}@(k,5) - \mathrm{PASS}@(k,3))/2$; all other $\Delta_T$ entries are adjacent steps. Dashes indicate cells unavailable for this category.}
\label{tab:delta_b}
\begin{tabular}{cc|cc|cc|cc}
\toprule
 & & \multicolumn{2}{c|}{$\pi_{\mathrm{base}}$} & \multicolumn{2}{c|}{$\pi_{\mathrm{SFT}}$} & \multicolumn{2}{c}{$\pi_{\mathrm{RL}}$} \\
$k$ & $T$ & $\Delta_k$ & $\Delta_T$ & $\Delta_k$ & $\Delta_T$ & $\Delta_k$ & $\Delta_T$ \\
\midrule
1 & 0 & +0.005 & +0.263 & +0.085 & +0.125 & +0.030 & +0.206 \\
2 & 0 & +0.008 & +0.396 & +0.085 & +0.170 & +0.033 & +0.332 \\
4 & 0 & +0.010 & +0.528 & +0.079 & +0.194 & +0.034 & +0.455 \\
8 & 0 & +0.011 & +0.632 & +0.071 & +0.200 & +0.030 & +0.537 \\
16 & 0 & +0.009 & +0.702 & +0.055 & +0.193 & +0.022 & +0.581 \\
32 & 0 & +0.003 & +0.748 & +0.027 & +0.180 & +0.009 & +0.601 \\
\addlinespace[2pt]
1 & 1 & +0.139 & +0.039 & +0.131 & +0.018 & +0.156 & +0.010 \\
2 & 1 & +0.139 & +0.035 & +0.109 & +0.047 & +0.155 & +0.009 \\
4 & 1 & +0.114 & +0.015 & +0.085 & +0.061 & +0.116 & +0.002 \\
8 & 1 & +0.082 & -0.007 & +0.064 & +0.053 & +0.074 & -0.006 \\
16 & 1 & +0.054 & -0.022 & +0.042 & +0.035 & +0.043 & -0.013 \\
32 & 1 & +0.035 & -0.024 & +0.027 & +0.023 & +0.017 & -0.014 \\
\addlinespace[2pt]
1 & 2 & +0.135 & -0.116 & +0.160 & -0.024 & +0.155 & -0.018 \\
2 & 2 & +0.120 & -0.137 & +0.123 & -0.028 & +0.149 & -0.017 \\
4 & 2 & +0.092 & -0.124 & +0.077 & -0.027 & +0.108 & -0.005 \\
8 & 2 & +0.067 & -0.086 & +0.046 & -0.022 & +0.067 & +0.009 \\
16 & 2 & +0.052 & -0.047 & +0.030 & -0.020 & +0.042 & +0.016 \\
32 & 2 & +0.039 & -0.019 & +0.024 & -0.018 & +0.031 & +0.016 \\
\addlinespace[2pt]
1 & 3 & +0.114 & -0.007 & +0.156 & +0.006 & +0.155 & -0.002 \\
2 & 3 & +0.133 & -0.008 & +0.125 & +0.007 & +0.160 & +0.000 \\
4 & 3 & +0.129 & -0.006 & +0.081 & +0.006 & +0.122 & +0.003 \\
8 & 3 & +0.106 & -0.001 & +0.048 & +0.006 & +0.075 & +0.005 \\
16 & 3 & +0.079 & +0.003 & +0.032 & +0.006 & +0.041 & +0.002 \\
32 & 3 & +0.048 & +0.005 & +0.032 & +0.004 & +0.015 & -0.001 \\
\bottomrule
\end{tabular}
\end{table}

\begin{table}[h]
\centering
\small
\caption{Category C (HotPotQA bridge): marginal improvements $\Delta_k$ and per-round $\Delta_T$ at each $(k, T)$ operating point. $\Delta_T$ for $T=3$ is the per-round normalization $(\mathrm{PASS}@(k,5) - \mathrm{PASS}@(k,3))/2$; all other $\Delta_T$ entries are adjacent steps. Dashes indicate cells unavailable for this category.}
\label{tab:delta_c}
\begin{tabular}{cc|cc|cc|cc}
\toprule
 & & \multicolumn{2}{c|}{$\pi_{\mathrm{base}}$} & \multicolumn{2}{c|}{$\pi_{\mathrm{SFT}}$} & \multicolumn{2}{c}{$\pi_{\mathrm{RL}}$} \\
$k$ & $T$ & $\Delta_k$ & $\Delta_T$ & $\Delta_k$ & $\Delta_T$ & $\Delta_k$ & $\Delta_T$ \\
\midrule
1 & 0 & +0.002 & +0.296 & +0.024 & +0.090 & +0.009 & +0.255 \\
2 & 0 & +0.003 & +0.393 & +0.027 & +0.122 & +0.011 & +0.361 \\
4 & 0 & +0.003 & +0.477 & +0.026 & +0.157 & +0.012 & +0.455 \\
8 & 0 & +0.003 & +0.540 & +0.023 & +0.200 & +0.014 & +0.521 \\
16 & 0 & +0.003 & +0.587 & +0.019 & +0.245 & +0.013 & +0.563 \\
32 & 0 & +0.005 & +0.621 & +0.025 & +0.290 & +0.013 & +0.593 \\
\addlinespace[2pt]
1 & 1 & +0.100 & +0.035 & +0.055 & +0.131 & +0.115 & +0.054 \\
2 & 1 & +0.086 & +0.023 & +0.062 & +0.186 & +0.104 & +0.057 \\
4 & 1 & +0.067 & +0.016 & +0.069 & +0.221 & +0.078 & +0.051 \\
8 & 1 & +0.050 & +0.024 & +0.068 & +0.229 & +0.056 & +0.052 \\
16 & 1 & +0.036 & +0.045 & +0.064 & +0.224 & +0.043 & +0.060 \\
32 & 1 & +0.024 & +0.070 & +0.075 & +0.214 & +0.041 & +0.072 \\
\addlinespace[2pt]
1 & 2 & +0.088 & +0.017 & +0.110 & -0.010 & +0.118 & +0.004 \\
2 & 2 & +0.079 & +0.012 & +0.097 & -0.003 & +0.098 & +0.002 \\
4 & 2 & +0.075 & +0.006 & +0.077 & +0.010 & +0.079 & -0.000 \\
8 & 2 & +0.070 & +0.001 & +0.063 & +0.026 & +0.064 & -0.000 \\
16 & 2 & +0.062 & -0.003 & +0.054 & +0.034 & +0.056 & +0.000 \\
32 & 2 & +0.044 & -0.005 & +0.041 & +0.032 & +0.059 & -0.003 \\
\addlinespace[2pt]
1 & 3 & +0.083 & +0.002 & +0.117 & +0.003 & +0.116 & +0.003 \\
2 & 3 & +0.073 & +0.007 & +0.110 & +0.000 & +0.096 & +0.001 \\
4 & 3 & +0.070 & +0.010 & +0.093 & -0.003 & +0.079 & -0.001 \\
8 & 3 & +0.066 & +0.012 & +0.071 & -0.006 & +0.065 & -0.001 \\
16 & 3 & +0.059 & +0.012 & +0.052 & -0.009 & +0.052 & +0.002 \\
32 & 3 & +0.050 & +0.011 & +0.040 & -0.011 & +0.042 & +0.007 \\
\bottomrule
\end{tabular}
\end{table}

\begin{table}[h]
\centering
\small
\caption{Sampling-sufficiency check. For each (model, category) we compare $\mathrm{PASS}@(k,T)$ computed on a random sub-sample of $n' \in \{32, 48\}$ trajectories against the full $n = 64$ estimate, averaged across $k \in \{1,4,16,64\}$ (restricted to $k \leq n'$) and all available $T$. Deviations are small relative to the binomial standard error $\sqrt{p(1{-}p)/n} \approx 0.06$ at $p \approx 0.5$, indicating $n = 64$ is sufficient for the comparisons reported in the main text.}
\label{tab:sampling_sufficiency}
\begin{tabular}{llccc}
\toprule
Category & Model & $n'$ & Mean $|\Delta|$ & Max $|\Delta|$ \\
\midrule
A & $\pi_{\mathrm{base}}$ & 32 & 0.0032 & 0.0049 \\
A & $\pi_{\mathrm{base}}$ & 48 & 0.0019 & 0.0047 \\
A & $\pi_{\mathrm{SFT}}$ & 32 & 0.0046 & 0.0075 \\
A & $\pi_{\mathrm{SFT}}$ & 48 & 0.0006 & 0.0011 \\
A & $\pi_{\mathrm{RL}}$ & 32 & 0.0102 & 0.0202 \\
A & $\pi_{\mathrm{RL}}$ & 48 & 0.0014 & 0.0023 \\
B & $\pi_{\mathrm{base}}$ & 32 & 0.0046 & 0.0194 \\
B & $\pi_{\mathrm{base}}$ & 48 & 0.0027 & 0.0067 \\
B & $\pi_{\mathrm{SFT}}$ & 32 & 0.0049 & 0.0147 \\
B & $\pi_{\mathrm{SFT}}$ & 48 & 0.0036 & 0.0100 \\
B & $\pi_{\mathrm{RL}}$ & 32 & 0.0057 & 0.0182 \\
B & $\pi_{\mathrm{RL}}$ & 48 & 0.0028 & 0.0089 \\
C & $\pi_{\mathrm{base}}$ & 32 & 0.0029 & 0.0114 \\
C & $\pi_{\mathrm{base}}$ & 48 & 0.0018 & 0.0043 \\
C & $\pi_{\mathrm{SFT}}$ & 32 & 0.0050 & 0.0138 \\
C & $\pi_{\mathrm{SFT}}$ & 48 & 0.0016 & 0.0050 \\
C & $\pi_{\mathrm{RL}}$ & 32 & 0.0048 & 0.0119 \\
C & $\pi_{\mathrm{RL}}$ & 48 & 0.0024 & 0.0063 \\
\bottomrule
\end{tabular}
\end{table}

\begin{table}[h]
\centering
\small
\caption{Bootstrap 95\% confidence intervals for the capability-boundary statistics reported in Table~\ref{tab:capability}, computed over 1000 Bernoulli bootstrap replicates per problem using the empirical $c_T/n$ as each problem's success rate. The RL vs.\ base and SFT vs.\ base asymmetries on Category C remain far from zero under the bootstrap, so the $5$ vs.\ $1$ and $3$ vs.\ $7$ splits are not artifacts of sampling noise.}
\label{tab:bootstrap_ci}
\begin{tabular}{llccc}
\toprule
Category & Quantity & Mean & 95\% CI & Point est.\ (main text) \\
\midrule
A & $|\mathcal{B}^{(C)}_{\mathrm{RL}}|$ & 82.0 & [79.0, 84.0] & 84 \\
A & $|\mathcal{B}_{\mathrm{RL}} \setminus \mathcal{B}_{\mathrm{base}}|$ & 3.1 & [1.0, 6.0] & 3 \\
A & $|\mathcal{B}_{\mathrm{base}} \setminus \mathcal{B}_{\mathrm{RL}}|$ & 3.4 & [1.0, 6.0] & 3 \\
A & $|\mathcal{B}_{\mathrm{SFT}} \setminus \mathcal{B}_{\mathrm{base}}|$ & 4.8 & [2.0, 8.0] & 5 \\
A & $|\mathcal{B}_{\mathrm{base}} \setminus \mathcal{B}_{\mathrm{SFT}}|$ & 2.2 & [0.0, 5.0] & 0 \\
\addlinespace[2pt]
B & $|\mathcal{B}^{(C)}_{\mathrm{RL}}|$ & 84.3 & [82.0, 86.0] & 86 \\
B & $|\mathcal{B}_{\mathrm{RL}} \setminus \mathcal{B}_{\mathrm{base}}|$ & 6.1 & [3.0, 10.0] & 5 \\
B & $|\mathcal{B}_{\mathrm{base}} \setminus \mathcal{B}_{\mathrm{RL}}|$ & 1.0 & [0.0, 2.0] & 1 \\
B & $|\mathcal{B}_{\mathrm{SFT}} \setminus \mathcal{B}_{\mathrm{base}}|$ & 7.4 & [5.0, 11.0] & 5 \\
B & $|\mathcal{B}_{\mathrm{base}} \setminus \mathcal{B}_{\mathrm{SFT}}|$ & 2.3 & [1.0, 4.0] & 2 \\
\addlinespace[2pt]
C & $|\mathcal{B}^{(C)}_{\mathrm{RL}}|$ & 77.2 & [74.0, 80.0] & 81 \\
C & $|\mathcal{B}_{\mathrm{RL}} \setminus \mathcal{B}_{\mathrm{base}}|$ & 5.0 & [2.0, 8.0] & 5 \\
C & $|\mathcal{B}_{\mathrm{base}} \setminus \mathcal{B}_{\mathrm{RL}}|$ & 2.7 & [1.0, 5.0] & 1 \\
C & $|\mathcal{B}_{\mathrm{SFT}} \setminus \mathcal{B}_{\mathrm{base}}|$ & 4.2 & [2.0, 7.0] & 3 \\
C & $|\mathcal{B}_{\mathrm{base}} \setminus \mathcal{B}_{\mathrm{SFT}}|$ & 8.7 & [7.0, 11.0] & 7 \\
\addlinespace[2pt]
\bottomrule
\end{tabular}
\end{table}

\section{Sampling Sufficiency}
\label{app:sampling}

To verify that $n = 64$ trajectories per $(q, T)$ cell is sufficient for the comparisons we report, we randomly sub-sample the 64 rollouts to $n' \in \{32, 48\}$ with a fixed seed and recompute category-averaged $\textsc{Pass}@(k, T)$. Table~\ref{tab:sampling_sufficiency} reports the mean and maximum absolute deviation between the sub-sampled average and the full-$n$ average, taken over all $(k, T)$ cells in the evaluation grid. Mean deviations are under $0.006$ and maxima under $0.02$, well below the binomial standard error $\sqrt{p(1{-}p)/n} \approx 0.06$ at $p \approx 0.5$, and an order of magnitude smaller than the cross-model gaps we rely on in \S\ref{sec:results}. We conclude that $n = 64$ is more than sufficient for the conclusions of this paper.

\section{Per-problem Analysis for Categories A and B}
\label{app:per_problem_ab}

For completeness we repeat Figure~\ref{fig:capability}'s per-problem format for Categories A and B, which do not feature in the main text because their capability-set asymmetries are small and symmetric between models. Figures \ref{fig:per_problem_a} and \ref{fig:per_problem_b} show per-problem $\textsc{Pass}@(64, T_{\max})$ (left) and capability-set decomposition vs.\ $\pibase$ (right). On Category A, the three models produce nearly identical columns in the sorted heatmap, and the only-model and only-base slices are balanced, consistent with the negative-control status of Category A. On Category B, $\pisft$, $\piexplore$, and $\pirl$ each add a handful of problems over $\pibase$ and lose a handful, producing the symmetric pattern reported in Table~\ref{tab:capability} (rows B).

\begin{figure}[h]
\centering
\includegraphics[width=\textwidth]{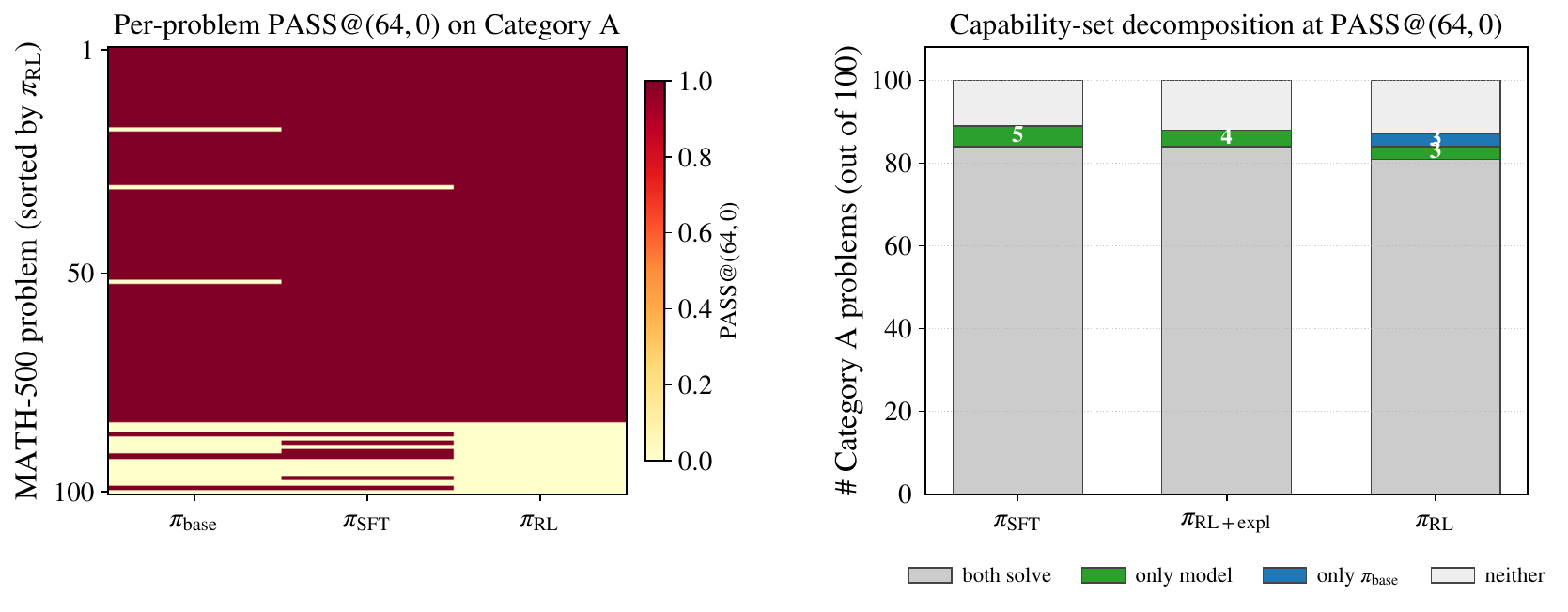}
\caption{Per-problem $\textsc{Pass}@(64, 0)$ on Category A (MATH-500). The three models' columns are nearly identical; the capability-set decomposition is symmetric with only-model and only-base counts both near three.}
\label{fig:per_problem_a}
\end{figure}

\begin{figure}[h]
\centering
\includegraphics[width=\textwidth]{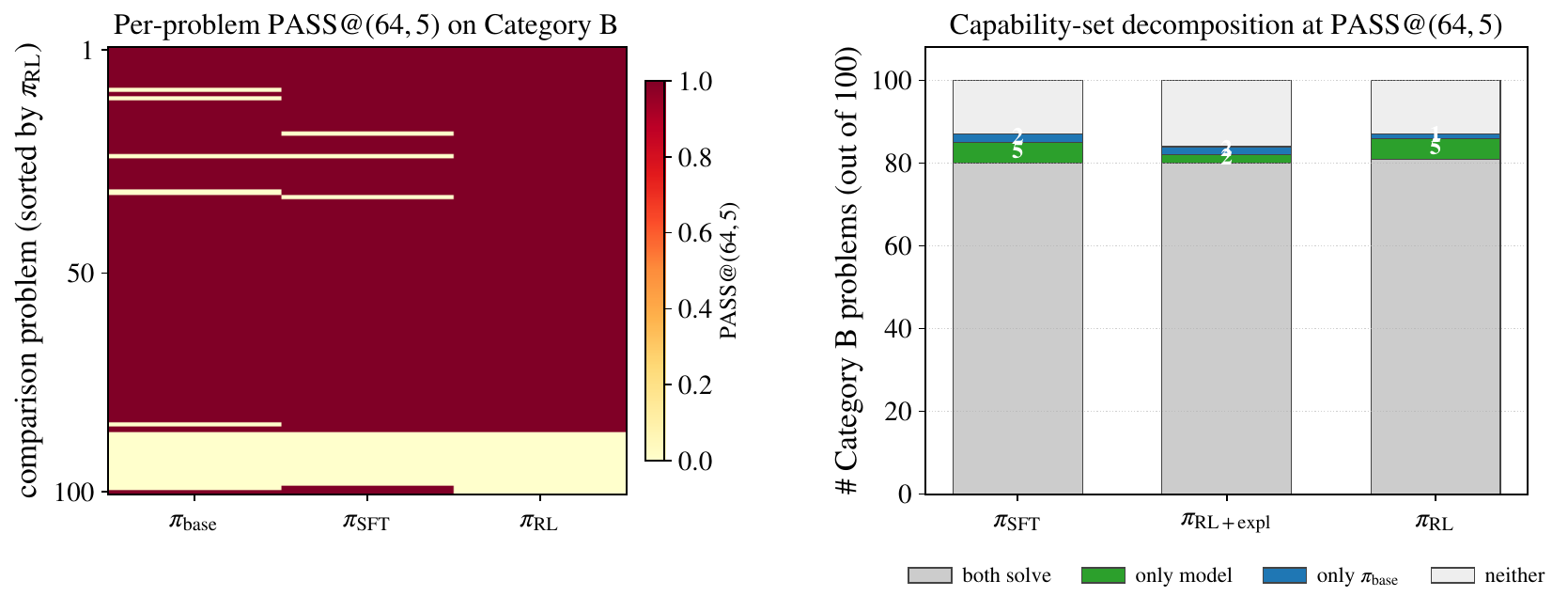}
\caption{Per-problem $\textsc{Pass}@(64, 5)$ on Category B (comparison questions). $\pisft$ and $\pirl$ produce similar capability-set decompositions, each adding $\sim 5$ new bridge problems over $\pibase$ and losing $\sim 2$.}
\label{fig:per_problem_b}
\end{figure}

\section{Bootstrap Confidence Intervals}
\label{app:bootstrap}

We assess the sampling uncertainty in the Table~\ref{tab:capability} capability-boundary statistics via a Bernoulli bootstrap. For each test problem $q$ and model $\pi$ we treat $c_T(q, \pi)/n$ as $\pi$'s success rate on $q$ at depth $T$; we then generate 1000 bootstrap replicates by redrawing $n = 64$ Bernoulli trials per problem with the observed rate, and recompute the capability boundary $\mathcal{B}_T$, the set-difference cardinalities, and the three-way decomposition in each replicate. Table~\ref{tab:bootstrap_ci} reports the mean and 95\% percentile CIs. On Category C, $|\mathcal{B}_{\text{RL}} \setminus \mathcal{B}_{\text{base}}|$ has 95\% CI well above the analogous $\pisft$ statistic, and $|\mathcal{B}_{\text{base}} \setminus \mathcal{B}_{\text{SFT}}|$'s lower bound is 7, confirming that the matched-data SFT regression and the RL expansion are both significantly different from zero under the bootstrap.

\section{Category C per-$T$ Pass Curves}
\label{app:per_t}

Figure~\ref{fig:per_t} is an alternative visualization of the Category C row of Figure~\ref{fig:heatmaps}: instead of a heatmap, we plot $\textsc{Pass}@(k, T)$ vs.\ $k$ with one curve per $T$ value, and a separate subplot for each model. This view makes the effect of deepening interaction directly readable: the spacing between adjacent-$T$ curves is the per-round $\Delta_T$.

\begin{figure}[h]
\centering
\includegraphics[width=\textwidth]{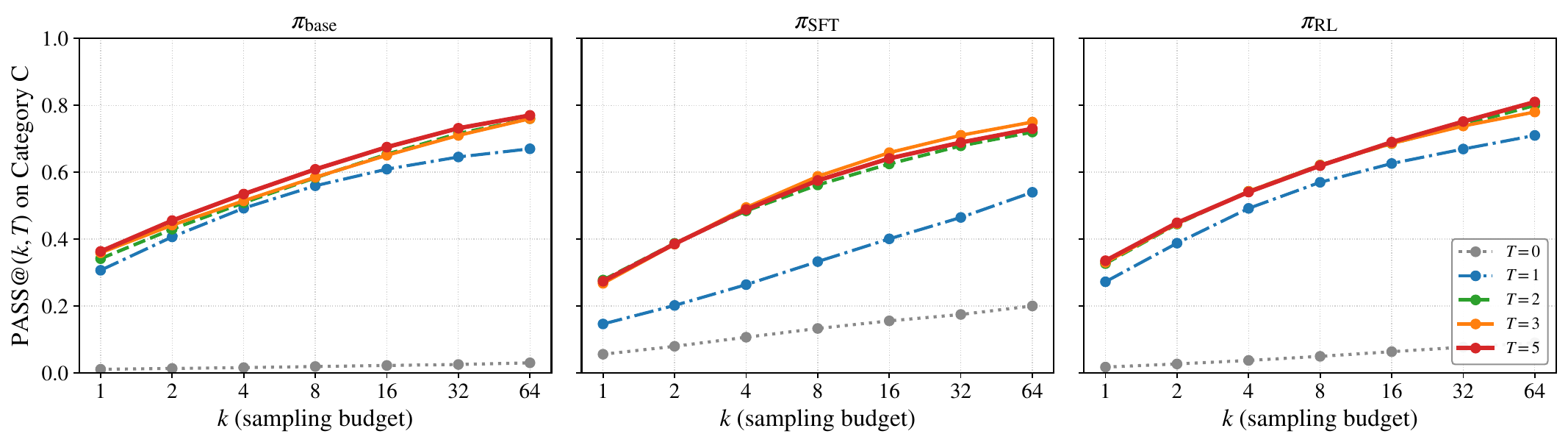}
\caption{Category C pass-curves by interaction depth $T$, one subplot per model. The curves for $T = 2, 3, 5$ cluster tightly together for all three models (the saturation observed in \S\ref{sec:perproblem}), with the largest $T$-direction jump between $T = 0$ and $T = 1$. $\pirl$'s cluster is shifted upward relative to $\pibase$'s across the entire $k$ axis.}
\label{fig:per_t}
\end{figure}

\section{SFT Perplexity Decomposition}
\label{app:sft_ppl}

Figure~\ref{fig:sft_ppl} complements Figure~\ref{fig:mechanism}(a) by computing the same base-model perplexity decomposition on 200 successful $\pisft$ Category C trajectories, then plotting both $\pisft$ and $\pirl$ distributions side-by-side. We find $\text{PPL}_{\text{search}}^{\text{SFT}} = 2.73$ (median $2.38$) and $\text{PPL}_{\text{reason}}^{\text{SFT}} = 3.93$ (median $3.56$), both materially higher than the corresponding $\pirl$ values of $2.07$ and $3.08$. In other words, the base model finds \emph{all} of SFT's trajectory components more surprising than RL's trajectory components: SFT has replaced the base distribution on both the retrieval and the reasoning side, while RL has stayed closer to base on both sides. This matches the strategy-diversity evidence in \S\ref{sec:mechanism}: SFT's mean novelty fraction is $97.7\%$ of sequences (almost complete distribution replacement), while RL's is $83.9\%$.

\begin{figure}[h]
\centering
\includegraphics[width=0.85\textwidth]{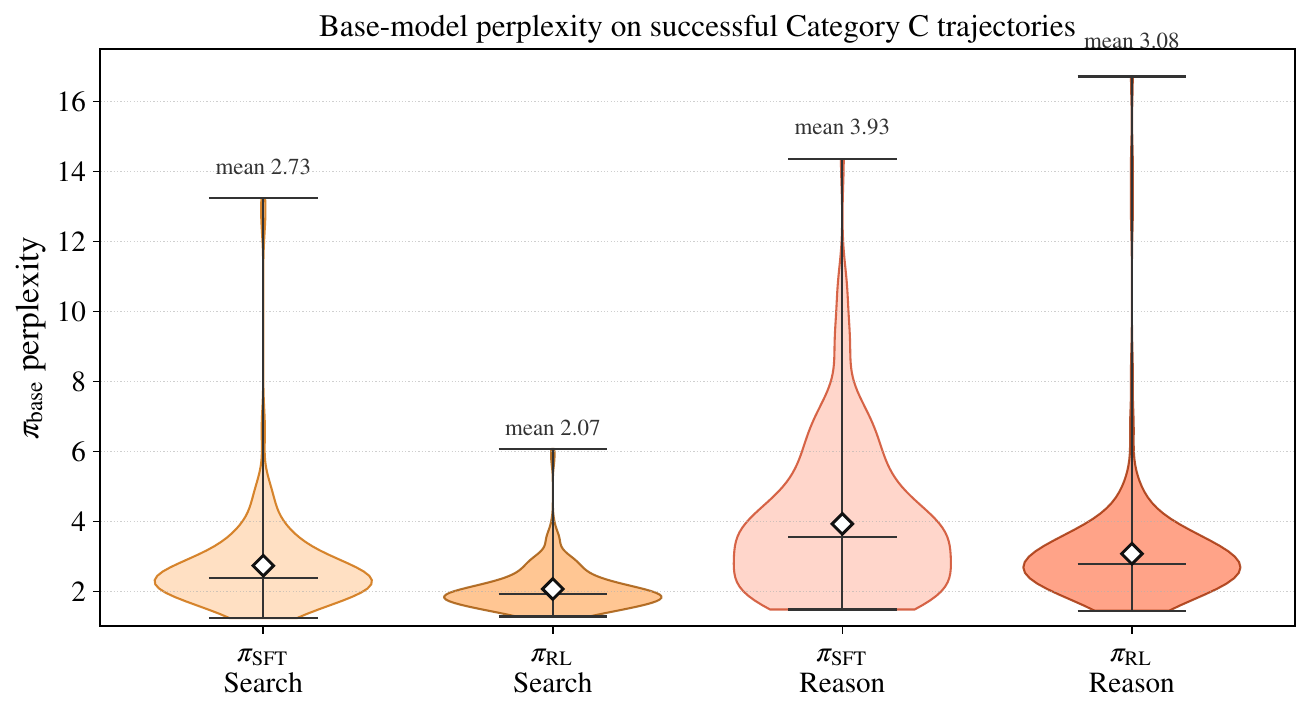}
\caption{$\pibase$'s perplexity on 200 successful Category C trajectories for $\pisft$ and $\pirl$, split into search-query tokens and reasoning tokens. Both $\text{PPL}_{\text{search}}$ and $\text{PPL}_{\text{reason}}$ are higher for $\pisft$ than for $\pirl$; SFT has displaced the base distribution on both sides, while RL has re-weighted within it.}
\label{fig:sft_ppl}
\end{figure}
\newpage

\end{document}